\titleformat*{\section}{\normalfont\bfseries}
\titleformat*{\subsection}{\normalfont\bfseries}
\titleformat*{\subsubsection}{\normalfont\bfseries}
\titleformat*{\paragraph}{\normalfont\bfseries}
\titleformat*{\subparagraph}{\normalfont\bfseries}
\newtheorem{remark}{Remark}
\newtheorem{lemma}{Lemma}
\newtheorem{definition}{Definition}
\begin{document}
\date{}

\title{Equivariant Filtering Framework for Inertial-Integrated Navigation}

\author{Yarong Luo, 
	yarongluo@whu.edu.cn\\
 	Chi Guo,
	guochi@whu.edu.cn\\
	Jingnan Liu,
	jnliu@whu.edu.cn\\
 	GNSS Research Center, Wuhan University
}



\maketitle

\thispagestyle{empty}

\noindent
{\bf\normalsize Abstract}\newline
{This paper proposes a equivariant filtering (EqF) framework for the inertial-integrated state estimation problem. As the kinematic system of the inertial-integrated navigation can be naturally modeling on the matrix Lie group $SE_2(3)$, the symmetry of the Lie group can be exploited to design a equivariant filter which extends the invariant extended Kalman filtering on the group affine system. Furthermore, details of the analytic state transition matrices for left invariant error and right invariant error are given.
} \vspace{2ex}

\noindent
{\bf\normalsize Key Words}\newline
{Equivariant Filtering, Left Equivariant System, $SE_2(3)$ matrix Lie group, inertial-integrated navigation, Group-affine Dynamics, Full Second Order Kinematic Systems, Analytic State Transition Matrix}


\section{Introduction}
{A}{s} dynamic systems on Lie groups are common in the robotics and avionics,
the development of robust and accurate state estimation algorithms for autonomous navigation system arouses a great deal of interest in robotics and avionics communities, especially when the states of the vehicles are evolving on Lie group. 
The symmetries of the system models are exploited to design observers and filters for attitude~\cite{ng2019attitude} and pose estimation~\cite{hua2015gradient}, tracking of homographies~\cite{hua2020nonlinear}, velocity aided attitude estimation~\cite{bonnable2009invariant}, and visual simultaneous localization and mapping (VSLAM)~\cite{zlotnik2018gradient,mahony2020equivariant}.
It is shown that any kinematic system on a Lie group can be embedded in a natural manner into an equivalent kinematic system and the equivariance properties of the system embedding can be applied to design equivalent filter for any kinematic system on a Lie group.

The $SE_2(3)$ based EKF framework proposed recently exploited the explicit algebraic invariance condition termed group affine that characterises the inertial-integrated navigation systems~\cite{luo2021se23}.
Robert et.al~\cite{mahony2020equivariant4} have shown that the equivalent embedding of group affine systems only requires a finite dimensional input vector space extension.
Furthermore, the research about equivariant observer design for any kinematic system on a homogeneous space has been studied in~\cite{mahony2020equivariant6} .
Therefore, it is attractive to embed the inertial-integrated navigation system into an equivalent system so that an equivalent filter can be designed as a kinematic system on a Lie group.
Equivariant system theory has been used for full second order kinematic systems on $\mathrm{T}\mathbf{SO}(3)$~\cite{ng2019attitude} and $\mathrm{T}\mathbf{SE}(3)$~\cite{ng2020equivariant}.
Motivated by the equivalent observer design for equivalent kinematic system, we propose an equivalent filter framework for the inertial-integrated navigation by leveraging the Lie group structure of $SE_2(3)$, which can be viewed as equivariant filter design for second order kinematic systems on $\mathrm{T}\mathbf{SE}_2(3)$.

The contributions of the paper can be summarized as follows:

1. We propose an equivalent filter framework for inertial-integrated navigation system which embedded the attitude, velocity, and position into the matrix Lie group $SE_2(3)$.

2. We consider the left equivalent system and its associated properties as well as relationships with the group affine systems when confronting the full second order kinematic systems on $\mathrm{T}\mathbf{SE}_2(3)$.

3. We give detailed derivations of the equivalent filter for inertial-integrated navigation on the navigation frame and the earth frame.

4. We analytically derive the block entries of the state transition matrices in details.

This remainder of this paper is organized as follows. Preliminaries are presented in Section II. 
In section III theory for left equivariant system is introduced. In section IV Full Second Order Kinematic System in navigation frame and earth frame are given. The equivariant filtering for second order kinematics systems on $TSE_2(3)$ is shown in Section V. Then, equivariant filtering design for inertial-integrated navigation system is shown in Section VI. Conclusion and future work are given in Section VII.
\section{Preliminaries}
The kinematics of the vehicles are described by the velocity, position and the direction, which are expressed on the manifold space and identified by different frames. The velocity and position can be represented by the vectors and the attitude in the 3-dimensional vector space can be represented by the direction cosine matrix (DCM). 
This three quantities can be reformulated as an element of the $SE_2(3)$ matrix Lie group. The $SE_2(3)$ matrix Lie group is also called the group of direct spatial isometries~\cite{barrau2017the} and it represents the space of matrices that apply a rigid body rotation and 2 translations to points in $\mathbb{R}^3$. Moreover, the group $SE_2(3)$ has the structure of the semidirect product of SO(3) group by $\mathbb{R}^3\times\mathbb{R}^3$ and can be expressed as $SE_2(3)=SO(3)\ltimes \underbrace{\mathbb{R}^3\times\mathbb{R}^3}_2$~\cite{luo2020geometry}. The relationship between the Lie algebra and the associated vector is described by a linear isomorphism $\Lambda$: $\mathbb{R}^9 \rightarrow \mathfrak{se}_2(3)$. More details about $SE_2(3)$ matrix Lie group can be found in~\cite{barrau2017the,luo2020geometry}. 
The uncertainties on matrix Lie group $SE_2(3)$ can be represented by left multiplication and right multiplication and lead to left-invariant error and right-invariant. The invariant property can be verified by left group action and right group action.

Meanwhile, the vector $v_{ab}^c$ describes the vector points from point a to point b and expressed in the c frame. The direction cosine matrix $C_d^f$ represents the rotation from the d frame to the f frame. 
The commonly used reference frames in inertial-integrated navigation systems are summarized as earth-centered-inertial (ECI) frames (i-frame), earth-centered-earth-fixed (ECEF) frames (e-frame), north-east-down navigation frames (n-frame), and forward-transversal-down body frames (b-frame)~\cite{shin2005estimation}.

The "plumb-bob gravity"~\cite{savage2000strapdown} is given as
\begin{equation}\label{plumb_bob_gravity}
g_{ib}=G_{ib}-(\omega_{ie}\times)^2r_{eb}
\end{equation}
where $g_{ib}$ is the gravity vector; $G_{ib}$ is the gravitational vector.
This formula can be expressed in ECI frame, ECEF frame, and navigation frame.
The perturbation on the gravity expressed in ECEF frame can be written as~\cite{groves2013principles}
\begin{equation}\label{perturbation_g_ib_e}
\delta g_{ib}^e\triangleq \tilde{g}_{ib}^e-g_{ib}^e\approx -\frac{\mu}{||r_{ib}^e||^3}\delta r_{ib}^e
\end{equation}
where $\mu$ is defined in Chapter2 of~\cite{groves2013principles}.

The perturbation on the gravity expressed in navigation frame can be written as
~\cite{shin2005estimation}
\begin{equation}\label{perturbation_gravity}
\delta g_{ib}^n\triangleq\tilde{g}_{ib}^n-g_{ib}^n \approx\begin{bmatrix}
0&0& \frac{2g_{ib}^n}{\sqrt{R_MR_N}+h}\delta r_D
\end{bmatrix}^T
\end{equation}
where $\sqrt{R_MR_N}$ is the Gaussian mean Earth radius of curvature; $\delta r_D$ is perturbation of the error position vector $\delta r_{eb}^n$ in the down direction of NED frame.

If the biases, scale factors, and non-orthogonalities of the accelerometers and gyroscopes are considered, then the uncertainty of the sensors can be expressed as~\cite{shin2005estimation}
\begin{equation}\label{uncertainty_sensors}
\begin{aligned}
\delta f_{ib}^b&=b_a+diag(f_{ib}^b)s_a+\Gamma_a\gamma_a\\
\delta \omega_{ib}^b&=b_g+diag(\omega_{ib}^b)s_g+\Gamma_g\gamma_g
\end{aligned}
\end{equation} 
where $b_a$ and $b_g$ are residual biases of the accelerometers and gyroscopes, respectively; $s_a$ and $s_g$ are the scale factors of the accelerometers and gyroscopes, respectively; $\gamma_a$ and $\gamma_g$ are the non-orthogonalities of the accelerometer triad and gyroscope triad, respectively.
$diag(a)$ represents the diagonal matrix form of a 3-dimensional vector $a$. $\Gamma_a$ and $\Gamma_g$ can be found in~\cite{shin2005estimation}.
The random constant, the random walk and the first-order Gauss-Markov models are typically used in modeling the inertial sensor errors~\cite{shin2005estimation}.

Finally, we summarize the commonly used frames in the inertial navigation and give detailed navigation equations in both the NED frame and the ECEF frame.
The position error state expressed in radians is usually very small, which will cause numerical instability in Kalman filtering calculation. Therefore, it is usually to represent the position error vector in terms of the XYZ coordinate system.
The position vector differential equation in terms of the NED frame can be calculated as
\begin{equation}\label{NED_position_1}
\dot{r}_{eb}^n=-\omega_{en}^n\times r_{eb}^n+v_{eb}^n
\end{equation}

The differential equation of the velocity vector in the NED frame is given by
\begin{equation}\label{NED_velocity_1}
\dot{v}_{eb}^n=C_b^nf_{ib}^b-[(2\omega_{ie}^n+\omega_{en}^n)\times]v_{eb}^n+g_{ib}^n
\end{equation}
where $\omega_{ie}^n$ is the earth rotation vector expressed in the navigation frame; $f_{ib}^b$ is the specific force vector in navigation frame; $\omega_{en}^n=\omega_{in}^n-\omega_{ie}^n$ is the angular rate vector of the navigation frame relative to the ECEF frame expressed in the navigation frame which is also called the transport rate.

The attitude in the NED frame can be represented by the direction cosine matrix $C_b^n$ and its differential equation is given by
\begin{equation}\label{NED_attitude_1}
\dot{C}_{b}^n=C_b^n(\omega_{ib}^b\times)-(\omega_{in}^n\times)C_b^n
\end{equation}

When the attitude, velocity, and position on NED frame are represented as $C_b^n$, $v_{ib}^n$, and $r_{ib}^n$, their differential equations are also considered. They are given as
\begin{equation}\label{ib_n}
\begin{aligned}
\dot{C}_{b}^n&=C_b^n(\omega_{ib}^b\times)-(\omega_{in}^n\times)C_b^n\\
\dot{v}_{ib}^n&=-\omega_{in}^n\times v_{ib}^n+C_b^nf_{ib}^b+G_{ib}^n\\
\dot{r}_{ib}^n&=-\omega_{in}^n\times r_{ib}^n+v_{ib}^n
\end{aligned}
\end{equation}

When the attitude, velocity, and position on ECEF frame are represented as $C_b^e$, $v_{eb}^e$, and $r_{eb}^e$, their differential equations are also considered. They are given as
\begin{equation}\label{eb_e}
\begin{aligned}
\dot{C}_{b}^e&=C_b^e(\omega_{ib}^b\times)-(\omega_{ie}^e\times)C_b^e\\
\dot{v}_{eb}^e&=-2\omega_{ie}^e\times v_{eb}^e+C_b^ef_{ib}^b+g_{ib}^e\\
\dot{r}_{eb}^e&=v_{eb}^e
\end{aligned}
\end{equation}

When the attitude, velocity, and position on ECEF frame are represented as $C_b^e$, $v_{ib}^e$, and $r_{ib}^e$, their differential equations are also considered. They are given as
\begin{equation}\label{ib_e}
\begin{aligned}
\dot{C}_{b}^e&=C_b^e(\omega_{ib}^b\times)-(\omega_{ie}^e\times)C_b^e\\
\dot{v}_{ib}^e&=-\omega_{ie}^e\times v_{ib}^e+C_b^ef_{ib}^b+G_{ib}^e\\
\dot{r}_{ib}^e&=-\omega_{ie}^e\times r_{ib}^e+v_{ib}^e
\end{aligned}
\end{equation}
\subsection{Lie group and Lie algebra}
A left group action $\psi$ of G on a smooth manifold $\mathcal{M}$ is a smooth mapping 
\begin{equation}\label{left_action}
\psi:G\times \mathcal{M}\rightarrow \mathcal{M}
\end{equation}
with $\psi(A,\psi(B,x))=\psi(AB,x)$ and $\psi(I,v)=v$ for all $A,B\in G$ and $v\in V$. 
A group action induces families of smooth diffeomorphisms $\psi_A:\mathcal{M}\rightarrow \mathcal{M}$ for $A\in G$ by $\psi_{A}(x):=\psi(A,x)$, and smooth nonlinear projection $\psi_{x}:G\rightarrow \mathcal{M}$ for $x\in \mathcal{M}$ by $\psi_{x}(A):=\psi(A,x)$. 
The left action $\psi$ is called linear if it induces linear mapping $\psi_{A}$.
\begin{lemma}
	Define a smooth map $d_{\star}L:\mathcal{M}\times\mathfrak{X}(\mathcal{M})\rightarrow\mathfrak{X}(\mathcal{M})$, by 
	\begin{equation}\label{linear_left_action_deduce}
	d_{\star}L(Z,F):=dL_{Z}\cdot F\circ L_{Z^{-1}}\in \mathfrak{X}(\mathcal{M})
	\end{equation}
	Then $d_{\star}L$ is a linear left group action on the vector space $\mathfrak{X}(\mathcal{M})$.
\end{lemma}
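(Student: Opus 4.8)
The plan is to recognise that $d_{\star}L(Z,F)$ is precisely the pushforward of the vector field $F$ by the diffeomorphism $L_Z$, that is $d_{\star}L(Z,F)=(L_Z)_{\ast}F=dL_Z\cdot F\circ L_{Z^{-1}}$, and then to verify the three defining requirements in turn: that $F\mapsto d_{\star}L(Z,F)$ is linear, that $d_{\star}L(I,\cdot)$ is the identity, and that $d_{\star}L(Z,d_{\star}L(Y,F))=d_{\star}L(ZY,F)$. Throughout I would rely only on three structural facts about the maps $L_Z$: that $L_I=\mathrm{id}_{\mathcal{M}}$, that $L_Z\circ L_Y=L_{ZY}$, and hence that $(L_Z)^{-1}=L_{Z^{-1}}$. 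These are exactly the consequences of the group-action axioms $\psi(A,\psi(B,x))=\psi(AB,x)$ and $\psi(I,x)=x$ recorded in \eqref{left_action}.

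The linearity and the identity axiom are immediate. For a fixed $Z$, the differential $dL_Z$ acts fiberwise as a linear isomorphism $T_q\mathcal{M}\to T_{L_Z(q)}\mathcal{M}$, so for scalars $\alpha,\beta$ and vector fields $F_1,F_2$ one obtains $d_{\star}L(Z,\alpha F_1+\beta F_2)=\alpha\,d_{\star}L(Z,F_1)+\beta\,d_{\star}L(Z,F_2)$ by evaluating at each point; the same fiberwise description shows the output is again a smooth section of $T\mathcal{M}$, so $d_{\star}L(Z,\cdot)$ genuinely maps into $\mathfrak{X}(\mathcal{M})$. For the identity, substituting $Z=I$ gives $d_{\star}L(I,F)=dL_I\cdot F\circ L_{I^{-1}}$, and since $L_I=\mathrm{id}_{\mathcal{M}}$ forces $dL_I=\mathrm{id}$ and $L_{I^{-1}}=\mathrm{id}_{\mathcal{M}}$, this collapses to $F$.

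The substantive step is the composition law, which I would establish by unfolding the definition twice:
\begin{equation}
d_{\star}L(Z,d_{\star}L(Y,F))=dL_Z\cdot\big(dL_Y\cdot F\circ L_{Y^{-1}}\big)\circ L_{Z^{-1}}.
\end{equation}
I would then regroup the two translations and the two differentials separately. For the translations, $L_{Y^{-1}}\circ L_{Z^{-1}}=L_{Y^{-1}Z^{-1}}=L_{(ZY)^{-1}}$; for the differentials, the chain rule gives $dL_Z\cdot dL_Y=d(L_Z\circ L_Y)=dL_{ZY}$. Combining these yields $dL_{ZY}\cdot F\circ L_{(ZY)^{-1}}=d_{\star}L(ZY,F)$, which is exactly the functoriality $(L_Z\circ L_Y)_{\ast}=(L_Z)_{\ast}\circ(L_Y)_{\ast}$ of the pushforward specialised to the maps $L_Z$. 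Since the resulting product $ZY$ appears in the same order as in $\psi(A,\psi(B,x))=\psi(AB,x)$, the action is genuinely a \emph{left} action.

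The one point that needs care—and the only place a careless computation can go astray—is the base-point bookkeeping in the chain-rule step. Writing $r=L_{(ZY)^{-1}}(p)$, the inner differential $dL_Y$ is evaluated at $r$ while the outer differential $dL_Z$ is evaluated at $L_Y(r)=L_{Z^{-1}}(p)$, and it is precisely this matching of evaluation points that makes $dL_Z|_{L_Y(r)}\circ dL_Y|_{r}=d(L_Z\circ L_Y)|_{r}=dL_{ZY}|_{r}$ hold fiber-by-fiber rather than only formally. Once the points are tracked explicitly through $p\mapsto L_{(ZY)^{-1}}(p)$, the two sides agree at every $p\in\mathcal{M}$, and together with linearity and the identity axiom this shows that $d_{\star}L$ is a linear left group action on $\mathfrak{X}(\mathcal{M})$.
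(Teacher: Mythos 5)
Your proposal is correct and follows essentially the same route as the paper's proof: a direct verification of the composition law via $dL_A\cdot dL_B=dL_{AB}$ and $L_{B^{-1}}\circ L_{A^{-1}}=L_{(AB)^{-1}}$, together with the identity axiom and fiberwise linearity. The only difference is presentational — you make explicit the pushforward interpretation and the base-point bookkeeping in the chain rule, which the paper leaves implicit.
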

\begin{proof}
	Note that for $A\in \mathcal{M}$, $L_{A}$ is a diffeomorphism with smooth inverse $L_{A^{-1}}$. 
	We have\begin{equation}\label{left_action_deduce_proof}
	\begin{aligned}
	d_{\star}L(A,d_{\star}L(B,F))&=dL_{A}\cdot (dL_{B}\cdot F\circ L_{B^{-1}})\circ L_{A^{-1}})\\
	&=dL_{AB}\cdot F\circ L_{(AB)^{-1}}\\
	&=d_{\star}L(AB,F)
	\end{aligned}
	\end{equation}
	for all $A.B\in G$ and $F\in\mathfrak{X}(\mathcal{M})$ since L is a left action on $\mathcal{M}$. The identity property of the group action is straightforward and this demonstrates $d_{\star}L$ is a group action. Linearity follows since
	\begin{equation}\label{linearity_proof}
	\begin{aligned}
	d_{\star}L(A,\alpha_1F_1+\alpha_2F_2)&=dL_{A}\cdot (A,\alpha_1F_1+\alpha_2F_2)\circ L_{A^{-1}} \\
	&=\alpha_1dL_A\cdot F_1\circ L_{A^{-1}}+\alpha_2ddL_A\cdot F_2\circ L_{A^{-1}}\\
	&=\alpha_1d_{\star}L(A,F_1)+\alpha_2d_{\star}L(A,F_2)
	\end{aligned}
	\end{equation}
	for all $A\in \mathcal{M}$, $F_1,F_2\in\mathfrak{X}(\mathcal{M})$ and $\alpha_1,\alpha_2\in \mathbb{R}$.
\end{proof}
\begin{remark}
	The linear group action $d_{\star}L$ defines a representation of the manifold $\mathcal{M}$ on the infinite dimensional vector space $\mathfrak{X}(\mathcal{M})$. We will see that equivariant kinematic systems correspond precisely to subrepresentations of this representation~\cite{mahony2020equivariant4}.
\end{remark}

We introduce an algebraic object that is closely related to the system function $\mathcal{F}$ and is important in understanding invariant system structures.
\begin{definition}\label{lift_definition}
	Let $\mathcal{F}:V\rightarrow \mathfrak{X}(\mathcal{M})$ be a kinematic system on a manifold $\mathcal{M}$ over a vector space $V$. The lift is the function $\Lambda: \mathcal{M}\times V\rightarrow \mathfrak{g}$ defined by
	\begin{equation}\label{lift}
	\Lambda(X,v):=\mathcal{F}_{v}(X)X^{-1}=dR_{X^{-1}}\mathcal{F}_{v}(X)
	\end{equation}
	for all $X\in \mathcal{M}$ and $v\in V$.
\end{definition}

The lift provides an algebraic structure that connects the input vector space $V$ to the Lie algebra $\mathfrak{g}$.
Properties of equivariance, invariance and being group affine can be expressed as algebraic properties of the map $\Lambda$ that hold on vector subspaces of $V$.

\section{Left Equivariant Systems}
A left equivariant system is one in which left translation of the system function is the same as evaluating the function at the translated base point along with a possible group action transformation of the input space~\cite{mahony2020equivariant6}.
\begin{definition}\label{definition_equi_sys}
	(Left Equivariant System) A kinematic system $\mathcal{F}: V\rightarrow \mathfrak{X}(\mathcal{M})$ is left equivariant if there exists a left group action $\psi: G\times V\rightarrow V$ such that 
	\begin{equation}\label{left_equivariant_definition}
	dL_A\mathcal{F}_v(X)=\mathcal{F}_{\psi_{A}(v)}(L_A(X))
	\end{equation}
	for all $A$, $X\in G$ and $v\in V$. 
\end{definition}
Assume that a system $\mathcal{F}:V\rightarrow \mathfrak{X}(\mathcal{M})$ is left equivariant according to Definition \ref{definition_equi_sys}. Then 
\begin{equation}\label{equivariant_intepreter}
\begin{aligned}
\mathcal{F}_{\psi_{A}(v)}(X)&=\mathcal{F}_{\psi_{A}(v)}(L_A(A^{-1}X))\\
&=dL_A\mathcal{F}_v(A^{-1}X)\\
&=dL_A\cdot \mathcal{F}_v \circ L_{A^{-1}}(X)\\
&=d_{\star}L_{A}\mathcal{F}_{v}(X)
\end{aligned}
\end{equation}
where the second line follows from equation (\ref{left_equivariant_definition}). In particular, the input group action $\psi$ is uniquely determined by the group action $d_{\star}L$ on the vector field.

Another way of reading equation (\ref{equivariant_intepreter}) is that for an equivariant kinematic system, the vector subspace $im\mathcal{F}\subset \mathfrak{X}(\mathcal{M})$ is invariant under the group action $d_{\star}L$. 

Conversely, given such an invariant subspace $V\subset \mathfrak{X}(\mathcal{M})$, define a kinematic system by the natural injection $\mathcal{F}:V\rightarrow \mathfrak{X}(\mathcal{M})$.
The invariance of $V$ then implies that for all $A\in \mathcal{M}$ and $v\in V$ we have $d_{\star}L_{A}\mathcal{F}_{v}=\mathcal{F}_{w}$ for some $w\in V$. 
Define $\psi_{A}(v):=w$ and observer that this defines a left group action of G on $V=im\mathcal{F}\subset \mathfrak{X}(\mathcal{M})$ since $d_{\star}L$ is a left group action on $\mathfrak{X}(\mathcal{M})$.
The following lemma sums up this observation.
\begin{lemma}\cite{van2020equivariant}\quad
	A kinematic system $\mathcal{F}: V\rightarrow\mathfrak{X}(\mathcal{M})$ is left equivariant if and only if the vector subspace $im\mathcal{F}\subset\mathfrak{X}(\mathcal{M})$ is invariant under the left group action $d_{\star}L$.
\end{lemma}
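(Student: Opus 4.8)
The plan is to prove the two implications of the stated equivalence separately, assembling the pieces that the preceding discussion has already assembled. The forward direction is essentially immediate from equation (\ref{equivariant_intepreter}), while the converse requires reconstructing the input action $\psi$ from the invariance hypothesis and checking that it satisfies the group-action axioms.

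For the forward implication, I would assume $\mathcal{F}$ is left equivariant in the sense of Definition \ref{definition_equi_sys} and show directly that $d_{\star}L_A$ maps $im\mathcal{F}$ into itself. The chain of equalities in equation (\ref{equivariant_intepreter}) already yields $d_{\star}L_A\mathcal{F}_v = \mathcal{F}_{\psi_A(v)}$ for every $A\in G$ and $v\in V$. Since $\mathcal{F}_{\psi_A(v)}\in im\mathcal{F}$ by definition, this says precisely that $d_{\star}L_A(im\mathcal{F})\subseteq im\mathcal{F}$, which is exactly invariance of $im\mathcal{F}$ under the left group action $d_{\star}L$. No further work is needed once (\ref{equivariant_intepreter}) is in hand.

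For the converse, I would start from an invariant subspace and reconstruct the input action. Identifying $V$ with $im\mathcal{F}\subset\mathfrak{X}(\mathcal{M})$ through the natural injection $\mathcal{F}$, invariance of $im\mathcal{F}$ means that for each $A\in G$ and $v\in V$ there is some $w\in V$ with $d_{\star}L_A\mathcal{F}_v = \mathcal{F}_w$. I would then define $\psi_A(v):=w$ and verify that $\psi$ is a genuine left group action on $V$: the identity property $\psi_I(v)=v$ follows from $d_{\star}L_I\mathcal{F}_v=\mathcal{F}_v$, and the composition rule $\psi_A(\psi_B(v))=\psi_{AB}(v)$ follows from $d_{\star}L_A(d_{\star}L_B\mathcal{F}_v)=d_{\star}L_{AB}\mathcal{F}_v$, both inherited from the group-action properties of $d_{\star}L$ proved in the first lemma. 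Finally, reversing the chain of equalities in (\ref{equivariant_intepreter}) recovers the defining relation $dL_A\mathcal{F}_v(X)=\mathcal{F}_{\psi_A(v)}(L_A(X))$ of (\ref{left_equivariant_definition}), establishing left equivariance.

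The main obstacle I anticipate is the well-definedness of $\psi$ in the converse direction: the element $w$ with $d_{\star}L_A\mathcal{F}_v=\mathcal{F}_w$ is unambiguous only when $\mathcal{F}$ is injective. This is precisely why the argument treats $V$ as embedded in $\mathfrak{X}(\mathcal{M})$ via the natural injection, so that $\mathcal{F}$ is injective by construction and $w$ is uniquely determined by $A$ and $v$. With this convention the group-action axioms for $\psi$ transfer cleanly from those of $d_{\star}L$, and no genuine analytic difficulty remains beyond bookkeeping.
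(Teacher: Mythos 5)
Your proposal is correct and follows essentially the same route as the paper: the forward direction is read off from equation (\ref{equivariant_intepreter}), and the converse defines $\psi_{A}(v):=w$ via $d_{\star}L_{A}\mathcal{F}_{v}=\mathcal{F}_{w}$, inherits the group-action axioms from $d_{\star}L$, and reverses the chain of equalities to recover (\ref{left_equivariant_definition}). Your explicit remark on well-definedness of $\psi$ via injectivity of $\mathcal{F}$ is a welcome clarification of a point the paper handles only implicitly by taking $\mathcal{F}$ to be the natural injection.
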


The following lemma expresses equivariance in terms of an algebraic property of the lift $\Lambda$.
\begin{lemma}
	Let $\mathcal{F}:V\rightarrow \mathfrak{X}(\mathcal{M})$ be an equivariant kinematic system on a manifold $\mathcal{M}$ over a vector space $V$ with lift $\Lambda$. The lift $\Lambda$ satisfies
	\begin{equation}\label{equivairant_system_lift}
	Ad_{A^{-1}}\Lambda(AX,\psi_{A}(v))=\Lambda(X,v)
	\end{equation}
	for all $A,X\in \mathcal{M}$ and $v\in V$.
\end{lemma}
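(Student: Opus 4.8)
The plan is to compute $\Lambda(AX,\psi_A(v))$ directly from the definition of the lift and show that it equals $Ad_A\,\Lambda(X,v)$; premultiplying by $Ad_{A^{-1}}$ and using $Ad_{A^{-1}}Ad_A=\mathrm{id}$ then yields the claim. Throughout I would take $\mathcal{M}=G$, so that the inverse $X^{-1}$ appearing in $\Lambda(X,v)=dR_{X^{-1}}\mathcal{F}_v(X)$ is well defined.

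First I would expand $\Lambda(AX,\psi_A(v))=dR_{(AX)^{-1}}\mathcal{F}_{\psi_A(v)}(AX)$ and invoke the equivariance hypothesis. Writing $AX=L_A(X)$, Definition \ref{definition_equi_sys} gives $\mathcal{F}_{\psi_A(v)}(AX)=dL_A\mathcal{F}_v(X)$, so that $\Lambda(AX,\psi_A(v))=dR_{(AX)^{-1}}\,dL_A\,\mathcal{F}_v(X)$.

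Next I would factor the right translation through $(AX)^{-1}=X^{-1}A^{-1}$ as $dR_{(AX)^{-1}}=dR_{A^{-1}}\,dR_{X^{-1}}$, and then commute $dR_{X^{-1}}$ past $dL_A$. The commutation is the crux of the argument: left and right translations commute as maps on $G$, $L_A\circ R_{X^{-1}}=R_{X^{-1}}\circ L_A$, so by the chain rule their differentials may be interchanged once the base points are matched, namely $d(R_{X^{-1}})_{AX}\circ d(L_A)_X=d(L_A)_e\circ d(R_{X^{-1}})_X$. After this interchange the trailing factor $d(R_{X^{-1}})_X\mathcal{F}_v(X)$ reassembles into $\Lambda(X,v)\in\mathfrak{g}$, leaving $\Lambda(AX,\psi_A(v))=d(R_{A^{-1}})_A\circ d(L_A)_e\,\Lambda(X,v)$.

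Finally I would identify the operator $d(R_{A^{-1}})_A\circ d(L_A)_e$ on $\mathfrak{g}=T_eG$ with the adjoint $Ad_A$, since it is precisely the differential at the identity of the conjugation $C_A=R_{A^{-1}}\circ L_A$. This gives $\Lambda(AX,\psi_A(v))=Ad_A\,\Lambda(X,v)$, and applying $Ad_{A^{-1}}$ to both sides finishes the proof. I expect the only real difficulty to be bookkeeping: tracking the tangent-space base points so that the three differentials $dL_A$, $dR_{X^{-1}}$, $dR_{A^{-1}}$ compose across the correct fibers and so that the translation commutation is applied at the right point. The conceptual content reduces to the single identity $dR_{A^{-1}}dL_A=Ad_A$.
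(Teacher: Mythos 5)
Your proposal is correct and follows essentially the same route as the paper's proof: expand the lift via its definition, invoke the equivariance identity $\mathcal{F}_{\psi_A(v)}(AX)=dL_A\mathcal{F}_v(X)$, and use the factorization $Ad_A=dR_{A^{-1}}\circ dL_A$ together with $(AX)^{-1}=X^{-1}A^{-1}$ to reassemble $\Lambda(X,v)$. The only difference is presentational --- you establish $\Lambda(AX,\psi_A(v))=Ad_A\Lambda(X,v)$ first and then apply $Ad_{A^{-1}}$, while the paper carries $Ad_{A^{-1}}$ through the computation from the start; your bookkeeping of base points is in fact more careful than the paper's matrix-style shorthand.
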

\begin{proof}
	We have for $A,X\in \mathcal{M}$ and $v\in V$,
	\begin{equation}\label{proof_equivariant_lift}
	\begin{aligned}
	Ad_{A^{-1}}\Lambda(AX,\psi_{A}(v))&=Ad_{A^{-1}}\mathcal{F}_{\psi_{A}(v)}(AX)((AX)^{-1})\\
	&=dR_{A}dL_{A^{-1}}\mathcal{F}_{\psi_{A}(v)}(AX)dL_{X^{-1}}dL_{A^{-1}}\\
	&=dL_{A^{-1}}\mathcal{F}_{\psi_{A}(v)}(L_{A}X)dL_{X^{-1}}\\
	&=dL_{A^{-1}} dL_{A}\mathcal{F}_{v}dL_{X^{-1}}\\
	&=\mathcal{F}_{v}X^{-1}=\Lambda(X,v)
	\end{aligned}
	\end{equation}
\end{proof}
\section{Full Second Order Kinematic Systems for Inertial-Integrated Navigation}
\subsection{Full Second Order Kinematic System in navigation frame}
The velocity vector $v_{eb}^n$, position vector $r_{eb}^n$, and attitude matrix $C_b^n$ can formula as the element of the $SE_2(3)$ matrix Lie group, that is
\begin{equation}\label{matrix_Lie_group_eb_n}
\mathcal{X}=\begin{bmatrix}
C_b^n & v_{eb}^n & r_{eb}^n\\
0_{1\times3} & 1 & 0\\
0_{1\times 3} & 0& 1
\end{bmatrix}\in SE_2(3)
\end{equation}

The inverse of the element can be written as follows
\begin{equation}\label{inverse_matrix_Lie_group_eb_b}
\mathcal{X}^{-1}=\begin{bmatrix}
C_n^b & -C_n^bv_{eb}^n & -C_n^br_{eb}^n\\
0_{1\times 3} &1 &0\\
0_{1\times 3} &0&1
\end{bmatrix}=\begin{bmatrix}
C_n^b & -v_{eb}^b & -r_{eb}^b\\
0_{1\times 3} &1 &0\\
0_{1\times 3} &0&1
\end{bmatrix}\in SE_2(3)
\end{equation}

Therefore, the differential equation of the $\mathcal{X}$ can be calculated as
\begin{equation}\label{differential}
\begin{aligned}
&\frac{d}{dt}\mathcal{X}=f_{u_t}(\mathcal{X})=\frac{d}{dt}\begin{bmatrix}
C_b^n & v_{eb}^n & r_{eb}^n\\
0_{1\times3} & 1 & 0\\
0_{1\times 3} & 0& 1
\end{bmatrix}
=\begin{bmatrix}
\dot{C}_b^n & \dot{v}_{eb}^n & \dot{r}_{eb}^n\\
0_{1\times3} & 0 & 0\\
0_{1\times 3} & 0& 0
\end{bmatrix}\\
=&\begin{bmatrix}
C_b^n(\omega_{ib}^b\times)-(\omega_{in}^n\times)C_b^n & C_b^nf_{ib}^b-\left[ (2\omega_{ie}^n+\omega_{en}^n)\times\right]v_{eb}^n+g_{ib}^n & -\omega_{en}^n\times r_{eb}^n+v_{eb}^n\\
0_{1\times3} & 0 & 0\\
0_{1\times 3} & 0& 0
\end{bmatrix}\\
\triangleq& \mathcal{X}W_1+W_2\mathcal{X}
\end{aligned}
\end{equation}
where $u_t$ is a sequence of inputs; $W_1$ and $W_2$ are denoted as
\begin{equation}\label{W_1_W_2}
W_1=\begin{bmatrix}
\omega_{ib}^b\times & f_{ib}^b & 0\\
0_{1\times3} & 0 & 0\\
0_{1\times 3} & 0& 0
\end{bmatrix},W_2=\begin{bmatrix}
-\omega_{in}^n\times & g_{ib}^n-\omega_{ie}^n\times v_{eb}^n & v_{eb}^n+\omega_{ie}^n\times r_{eb}^n\\
0_{1\times3} & 0 & 0\\
0_{1\times 3} & 0& 0
\end{bmatrix}
\end{equation}

It is easy to verify that the dynamical equation $f_{u_t}(\mathcal{X})$ is group-affine and the group-affine system owns the log-linear property of the corresponding error propagation~\cite{barrau2017the}:
\begin{equation}\label{proof_invariance}
\begin{aligned}
&f_{u_t}(\mathcal{X}_A)\mathcal{X}_B+\mathcal{X}_Af_{u_t}(\mathcal{X}_B)-\mathcal{X}_Af_{u_t}(I_d)\mathcal{X}_B\\
=&(\mathcal{X}_AW_1+W_2\mathcal{X}_A)\mathcal{X}_B+\mathcal{X}_A(\mathcal{X}_BW_1+W_2\mathcal{X}_B)-\mathcal{X}_A(W_1+W_2)\mathcal{X}_B\\
=&\mathcal{X}_A\mathcal{X}_BW_1+W_2\mathcal{X}_A\mathcal{X}_B\triangleq f_{u_t}(\mathcal{X}_A\mathcal{X}_B)
\end{aligned}
\end{equation}
\subsection{Full Second Order Kinematic System with transformed INS Mechanization in navigation frame}
The velocity vector $v_{eb}^n$, position vector $r_{eb}^n$, and attitude matrix $C_b^n$ can formula as the element of the $SE_2(3)$ matrix Lie group.
When the state defined on the matrix Lie group is given as
\begin{equation}\label{new_navigation_frame}
\mathcal{X}=\begin{bmatrix}
C_b^n & v_{ib}^n & r_{ib}^n\\
0_{1\times 3} & 1 & 0\\
0_{1\times 3} & 0 &1
\end{bmatrix}
\end{equation}
where $C_b^n$ is the direction cosine matrix from the body frame to the navigation frame; $v_{ib}^n$ is the velocity of body relative to the ECI frame expressed in the navigation frame.
Meanwhile, $v_{ib}^n=v_{eb}^n+\omega_{ie}^n\times r_{eb}^n$ and $r_{ib}^n=r_{eb}^n$.

Combine with the "plumb-bob gravity" equation in navigation frame, i.e., $g_{ib}^n=G_{ib}^n-(\omega_{ie}^n\times)^2r_{eb}^n$, the dynamic equation for the state $\mathcal{X}$ can be deduced as follows
\begin{equation}\label{group_affine_property_navigation_frame}
\begin{aligned}
&\frac{d}{dt}\mathcal{X}=f_{u_t}(\mathcal{X})=\frac{d}{dt}\begin{bmatrix}
C_b^n & v_{ib}^n & r_{ib}^n\\
0_{1\times 3} & 1 & 0\\
0_{1\times 3} & 0 &1
\end{bmatrix}=\begin{bmatrix}
\dot{C}_b^n & \dot{v}_{ib}^n & \dot{r}_{ib}^n\\
0_{1\times 3} & 0 & 0\\
0_{1\times 3} & 0 &0
\end{bmatrix}\\
=&\begin{bmatrix}
C_b^n(\omega_{ib}^b\times)-(\omega_{in}^n\times)C_b^n &-\omega_{in}^n\times v_{ib}^n+C_b^n f_{ib}^b+G_{ib}^n & -\omega_{in}^n\times r_{ib}^n+v_{ib}^n\\
0_{1\times 3} & 0 & 0\\
0_{1\times 3} & 0 &0
\end{bmatrix}\\
=&\begin{bmatrix}
C_b^n & v_{ib}^n & r_{ib}^n\\
0_{1\times 3} & 1 & 0\\
0_{1\times 3} & 0 &1
\end{bmatrix}\begin{bmatrix}
\omega_{ib}^b\times & f_{ib}^b & 0_{3\times 1}\\
0_{1\times 3} & 0 & 0\\
0_{1\times 3} & 0 &0
\end{bmatrix}+\begin{bmatrix}
-\omega_{in}^n\times & G_{ib}^n & v_{ib}^n\\
0_{1\times 3} & 0 & 0\\
0_{1\times 3} & 0 &0
\end{bmatrix}\begin{bmatrix}
C_b^n & v_{ib}^n & r_{ib}^n\\
0_{1\times 3} & 1 & 0\\
0_{1\times 3} & 0 &1
\end{bmatrix}\\
=&\mathcal{X}W_1+W_2\mathcal{X}
\end{aligned}
\end{equation}

It is easy to verify that the dynamical equation $f_{u_t}(\mathcal{X})$ is group-affine and the group-affine system owns the log-linear property of the corresponding error propagation~\cite{barrau2017the,luo2021se23}.

\subsection{Full Second Order Kinematic System in earth frame}
When the system  state is defined as 
\begin{equation}\label{new_state_eb_e}
\mathcal{X}=\begin{bmatrix}
C_b^e & v_{eb}^e & r_{eb}^e\\
0_{1\times 3} & 1 & 0\\
0_{1\times 3} & 0& 1
\end{bmatrix}\in SE_2(3)
\end{equation}
where $C_b^e$ is the direction cosine matrix from the body frame to the ECEF frame; $v_{eb}^e$ is the velocity of body frame relative to the ECEF frame expressed in the ECEF frame; $r_{eb}^e$ is the position of body frame relative to the ECEF frame expressed in the ECEF frame.

Then the dynamic equation of the state $\mathcal{X}$ can be deduced as follows
\begin{equation}\label{lie_group_state_differential_eb_e}
\begin{aligned}
&\frac{d}{dt}\mathcal{X}=f_{u_t}(\mathcal{X})=\frac{d}{dt}\begin{bmatrix}
C_b^e & v_{eb}^e & r_{eb}^e\\0_{1\times 3}&1&0\\0_{1\times 3}&0&1
\end{bmatrix}=\begin{bmatrix}
\dot{C}_b^e & \dot{v}_{eb}^e & \dot{r}_{eb}^e\\0_{1\times 3}&0&0\\0_{1\times 3}&0&0
\end{bmatrix}\\
=&\begin{bmatrix}
C_b^e(\omega_{ib}^b\times)-(\omega_{ie}^e\times)C_b^e & (-2\omega_{ie}^e\times)v_{eb}^e+C_b^ef^b+g_{ib}^e & v_{eb}^e\\
0_{1\times 3}&0&0\\0_{1\times 3}&0&0
\end{bmatrix}\\
=&\begin{bmatrix}
C_b^e & v_{eb}^e & r_{eb}^e\\0_{1\times 3}&1&0\\0_{1\times 3}&0&1
\end{bmatrix}\begin{bmatrix}
\omega_{ib}^b\times & f^b & 0_{3\times 1}\\ 0_{1\times 3} &0&0\\ 0_{1\times 3}&0&0
\end{bmatrix}+\\
&\begin{bmatrix}
-\omega_{ie}^e\times & g_{ib}^e-\omega_{ie}^e\times v_{eb}^e & v_{eb}^e+\omega_{ie}^e\times r_{eb}^e\\ 0_{1\times 3} &0&0\\ 0_{1\times 3}&0&0
\end{bmatrix}\begin{bmatrix}
C_b^e & v_{eb}^e & r_{eb}^e\\0_{1\times 3}&1&0\\0_{1\times 3}&0&1
\end{bmatrix}
\triangleq \mathcal{X}W_1+W_2\mathcal{X}
\end{aligned}
\end{equation}

It is easy to verify that the dynamical equation is group-affine property. 
\subsection{Full Second Order Kinematic System with transformed INS Mechanization in earth frame}
When the system state is defined as
\begin{equation}\label{new_state_ie_n}
\mathcal{X}=\begin{bmatrix}
C_b^e & v_{ib}^e & r_{ib}^e\\
0_{1\times3} & 1 & 0\\
0_{1\times 3} & 0& 1
\end{bmatrix}\in SE_2(3)
\end{equation}
where $C_b^e$ is the direction cosine matrix from the body frame to the ECEF frame; $v_{ib}^e$ is the velocity of body frame relative to the ECI frame expressed in the ECEF frame; $r_{ib}^e$ is the position of body frame relative to the ECI frame expressed in the ECEF frame.
Meanwhile, $v_{ib}^e=v_{eb}^e+\omega_{ie}^e\times r_{eb}^e$ and $r_{ib}^e=r_{eb}^e$.

Firstly, the state and its inverse defined on the matrix Lie group are given as
\begin{equation}\label{matrix_lie_group}
\mathcal{X}=\begin{bmatrix}
C_b^e & v_{ib}^e & r_{ib}^e\\0_{1\times 3}&1&0\\0_{1\times 3}&0&1
\end{bmatrix},\mathcal{X}^{-1}=\begin{bmatrix}
C_e^b & -v_{ib}^b & -r_{ib}^b\\0_{1\times 3}&1&0\\0_{1\times 3}&0&1
\end{bmatrix}
\end{equation}

Then, combine with the "plumb-bob gravity" equation in earth frame: $g_{ib}^e=G_{ib}^e-(\omega_{ie}^e\times)^2r_{eb}^e$, the dynamic equation of the state $\mathcal{X}$  can be deduced as follows
\begin{equation}\label{lie_group_state_differential}
\begin{aligned}
&\frac{d}{dt}\mathcal{X}=f_{u_t}(\mathcal{X})=\frac{d}{dt}\begin{bmatrix}
C_b^e & v_{ib}^e & r_{ib}^e\\0_{1\times 3}&1&0\\0_{1\times 3}&0&1
\end{bmatrix}=\begin{bmatrix}
\dot{C}_b^e & \dot{v}_{ib}^e & \dot{r}_{ib}^e\\0_{1\times 3}&0&0\\0_{1\times 3}&0&0
\end{bmatrix}\\
=&\begin{bmatrix}
C_b^e(\omega_{ib}^b\times)-(\omega_{ie}^e\times)C_b^e & (-\omega_{ie}^e\times)v_{ib}^e+C_b^ef_{ib}^b+G_{ib}^e & (-\omega_{ie}^e\times)r_{ib}^e+v_{ib}^e\\
0_{1\times 3}&0&0\\0_{1\times 3}&0&0
\end{bmatrix}\\
=&\begin{bmatrix}
C_b^e & v_{ib}^e & r_{ib}^e\\0_{1\times 3}&1&0\\0_{1\times 3}&0&1
\end{bmatrix}\begin{bmatrix}
\omega_{ib}^b\times & f_{ib}^b & 0_{3\times 1}\\ 0_{1\times 3} &0&0\\ 0_{1\times 3}&0&0
\end{bmatrix}+\begin{bmatrix}
-\omega_{ie}^e\times & G_{ib}^e & v_{ib}^e\\ 0_{1\times 3} &0&0\\ 0_{1\times 3}&0&0
\end{bmatrix}\begin{bmatrix}
C_b^e & v_{ib}^e & r_{ib}^e\\0_{1\times 3}&1&0\\0_{1\times 3}&0&1
\end{bmatrix}\\
=&\mathcal{X}W_1+W_2\mathcal{X}
\end{aligned}
\end{equation}

It is easy to verify that the dynamical equation(\ref{lie_group_state_differential}) satisfies the group-affine property.
\section{Equivariant Filtering for Second Order Kinematic Systems on $\mathrm{T}\mathbf{SE}_2(3)$}
The second order kinematic systems on $\mathrm{T}\mathbf{SE}_2(3)$ can be formulated as
\begin{equation}\label{kinematics_inertial}
\mathcal{F}_{v}(X)=\mathcal{F}_{W_1,W_2}(X)=XW_1+W_2X
\end{equation}

Define the function $\Lambda:\mathcal{M}\times V\rightarrow \mathfrak{g}$ as
\begin{equation}\label{Lift_inertial_navigation}
\Lambda(X,v)=\Lambda(X,(W_1,W_2)):=XW_1X^{-1}+W_2
\end{equation}
It is easy to verify that $\Lambda$ is a lift according to equation (\ref{lift}).

Define the velocity action $\psi:G\times V\rightarrow V$ by 
\begin{equation}\label{velocity_action}
\psi_A(v)=\psi(A,v)=\psi(A,(W_1,W_2)):=(W_1,Ad_{A}W_2)
\end{equation}
It is easily verified that $\psi$ is a group action on V.

To verify that $\Lambda$ is equivariant (\ref{equivairant_system_lift}), compute
\begin{equation}\label{inertial_equivariant_proof}
\begin{aligned}
Ad_{A^{-1}}\Lambda(AX,\psi_{A}(v))&=Ad_{A^{-1}}\Lambda(AX,(W_1,Ad_{A}W_2))=Ad_{A^{-1}}\left(AXW_1(AX)^{-1}+Ad_{A}W_2 \right)\\
&=XW_1X^{-1}+W_2=\Lambda(X,v)
\end{aligned}
\end{equation}
where $\phi:G\times\mathcal{M}\rightarrow \mathcal{M}$ is defined as $\phi(A,X):=AX\in \mathcal{M}$.
It is straightforward to verify that $\phi$ is a smooth, transitive left group action of $G$ on $\mathcal{M}$. 

In general, the group $G$ and $\mathcal{M}$ are the matrix Lie group $SE_2(3)$ in the inertial-integrated navigation system. 
More specifically, the state can be translated by the left group action between four states.
For the full states mentioned in equation (\ref{matrix_Lie_group_eb_n}), equation (\ref{new_navigation_frame}), equation (\ref{new_state_eb_e}), and equation (\ref{new_state_ie_n}), $A$ can be defined as 
\begin{equation}\label{A}
A_1=\begin{bmatrix}
C_b^e & 0_{3\times 1}& 0_{3\times 1}\\
0_{1\times 3}&1&0\\
0_{1\times 3}&0&1
\end{bmatrix},A_2=\begin{bmatrix}
I_{3\times 3} & \omega_{ie}^n\times r_{eb}^n & 0_{3\times 1}\\
0_{1\times 3}&1&0\\
0_{1\times 3}&0&1
\end{bmatrix},A_3=\begin{bmatrix}
I_{3\times 3} & \omega_{ie}^e\times r_{eb}^e & 0_{3\times 1}\\
0_{1\times 3}&1&0\\
0_{1\times 3}&0&1
\end{bmatrix}
\end{equation}

\section{Equivariant Filtering Design for the Inertial-Integrated Navigation}
As the second order kinematics are shown to be equivariant, the equivariant filtering can be designed for the inertial-integrated navigation systems. 
Therefore, there are four kinematics can be used to design the equivariant filtering algorithm respectively.
To save space, only one of them is given here, and the rest can be derived similarly.
Moreover, the error on the Lie group can be defined as one of $\tilde{X}^{-1}X$, $X^{-1}\tilde{X}$, $\tilde{X}X^{-1}$, and $X\tilde{X}^{-1}$, where the first two are left invariant and the last two are right invariant.
In the end, we only choose the right invariant error $\tilde{X}X^{-1}$ with second order kinematic with transformed INS mechanization in earth frame. More algorithms can be deduced by combining a kinematic system and a invariant error.

The right invariant error is defined as
\begin{equation}\label{key}
\begin{aligned}
\eta^R&=\tilde{\mathcal{X}}\mathcal{X}^{-1}=
\begin{bmatrix}
\tilde{C}_b^e & \tilde{v}_{ib}^e & \tilde{r}_{ib}^e\\0_{1\times 3}&1&0\\0_{1\times 3}&0&1
\end{bmatrix}
\begin{bmatrix}
C_e^b & -v_{ib}^b & -r_{ib}^b\\0_{1\times 3}&1&0\\0_{1\times 3}&0&1
\end{bmatrix}
=\begin{bmatrix}
\tilde{C}_b^eC_e^b & \tilde{v}_{ib}^e -\tilde{C}_b^ev_{ib}^b& \tilde{r}_{ib}^e-\tilde{C}_b^er_{ib}^b\\
0_{1\times 3}&1&0\\0_{1\times 3}&0&1
\end{bmatrix}
\end{aligned}
\end{equation}

Similarity, the new error state defined on the matrix Lie group $SE_2(3)$ can be denoted as
\begin{equation}\label{attitude_velocity_position_right}
\begin{aligned}
\tilde{C}_b^eC_e^b= C_{e}^{e'}=&\exp_G(\phi^e\times)\approx I-\phi^e\times\\
\eta_v^R=J\rho_{v}^e=&\tilde{v}_{ib}^e -\tilde{C}_b^ev_{ib}^b=\tilde{v}_{ib}^e- v_{ib}^e+v_{ib}^e-\tilde{C}_b^eC_e^bv_{ib}^e=\delta v_{ib}^e+(I-\exp_G(\phi^e\times)) v_{ib}^e\\
=&\delta v_{ib}^e+\phi^e\times v_{ib}^e\\ 
\eta_r^R=J\rho_{r}^e=&\tilde{r}_{ib}^e-\tilde{C}_b^er_{ib}^b=\tilde{r}_{ib}^e- r_{ib}^e+r_{ib}^e-\tilde{C}_b^eC_e^br_{ib}^e=\delta r_{ib}^e+(I-\exp_G(\phi^e\times)) r_{ib}^e\\
=&\delta r_{ib}^e+\phi^e\times r_{ib}^e
\end{aligned}
\end{equation}
where $e'$ is the estimated earth frame.

Meanwhile, the right invariant error satisfies that
\begin{equation}\label{lie_algebra_right_error}
\eta^R=\begin{bmatrix}
\exp_G(\phi^e\times) & J\rho_{v}^e & J\rho_{r}^e\\
0_{1\times 3} & 1& 0\\ 0_{1\times 3}&0&1
\end{bmatrix}=\exp_G\left( \begin{bmatrix}
(\phi^e\times) & \rho_{v}^e & \rho_{r}^e\\
0_{1\times 3} & 0& 0\\ 0_{1\times 3}&0&0
\end{bmatrix}\right)=\exp_G\left( \Lambda\begin{bmatrix}
\phi^e \\ \rho_{v}^e \\ \rho_{r}^e
\end{bmatrix} \right)=\exp_G(\Lambda(\rho^e))
\end{equation}
where $\phi^e$ is the attitude error state; $J\rho_v^e$ is the new definition of velocity error state; $J\rho_r^e$ is the new definition of position error state; $\Lambda$ is a linear isomorphism between the Lie algebra and the associated vector; $\phi^e\times$ denotes the skew-symmetric matrix generated from a 3D vector $\phi^e\in \mathbb{R}^3$; $\exp_G$ denotes the matrix exponential mapping; $J$ is the left Jacobian matrix of the 3D orthogonal rotation matrices group $SO(3)$ which is given as
\begin{equation}\label{left_Jacobian}
J=J_l(\phi)=\sum_{n=0}^{\infty}\frac{1}{(n+1)!}(\phi_{\wedge})^n=I_3+\frac{1-\cos\theta}{\theta^2}\phi_{\wedge}+\frac{\theta-\sin\theta}{\theta^3}\phi_{\wedge}^2,\theta=||\phi||
\end{equation}
$\rho^e=\begin{bmatrix}
{\phi^e}^T& {\rho_{v}^e}^T& {\rho_{r}^e}^T
\end{bmatrix}^T$ is a 9-dimensional state error vector defined on the Euclidean space that corresponding to the error state $\eta^R$ which is defined on the matrix Lie group. $J$ can be approximated as $J\approx I_{3\times 3}$ if $||\phi^e||$ is small enough.

$\exp_G(\phi^e\times)$ is the Rodriguez formula of the rotation vector and can be calculated by
\begin{equation}\label{Rodriguez_e}
\exp_G(\phi^e\times)=\cos\phi I_{3\times3}+\frac{1-\cos\phi}{\phi^2}\phi^e{\phi^e}^T+\frac{\sin\phi}{\phi}(\phi^e\times),\phi=||\phi^e||
\end{equation}

\begin{remark}
	It is worth noting that the direction cosine matrix $C_b^e$ represents the rotation from $b$ frame to $e$ frame where $e$ frame is fixed. Therefore, the rotation $C_{e'}^e$ can be approximated as $C_{e'}^e\approx I+\phi^e\times$ but $C_{e}^{e'}$ can be approximated as $C_{e}^{e'}\approx I-\phi^e\times$ as we assum the $e$ frame is fixed. When the error state is defined as $\eta^R=\mathcal{X}\tilde{\mathcal{X}}^{-1}$, the attitude error state will be defined as $C_b^e\tilde{C}_e^b\approx C_{e'}^e\approx I+\phi^e\times$.
\end{remark}

The differential equation of the attitude error state is given as
\begin{equation}\label{attitude_error_right}
\begin{aligned}
&\frac{d}{dt}(\tilde{C}_b^eC_e^b)=\dot{\tilde{C}}_b^eC_e^b+\tilde{C}_b^e\dot{C}_e^b\\
=&\left[\tilde{C}_b^e(\tilde{\omega}_{ib}^b\times)-(\tilde{\omega}_{ie}^e\times)\tilde{C}_b^e\right]C_e^b+\tilde{C}_b^e\left[C_e^b(\omega_{ie}^e\times)-(\omega_{ib}^b\times)C_e^b\right]\\
=&\tilde{C}_b^e(\tilde{{\omega}}_{ib}^b\times)C_e^b-({\omega}_{ie}^e\times)\tilde{C}_b^eC_e^b+\tilde{C}_b^eC_e^b(\omega_{ie}^e\times)-\tilde{C}_b^e(\omega_{ib}^b\times)C_e^b\\
\approx &\tilde{C}_b^e(\delta{\omega}_{ib}^b\times)C_e^b-({\omega}_{ie}^e\times)(I-\phi^e\times)+(I-\phi^e\times)(\omega_{ie}^e\times)\\
=&\tilde{C}_b^e(\delta \omega_{ib}^b\times){\tilde{C}_e^b\tilde{C}_b^eC_e^b}+({\omega}_{ie}^e\times)(\phi^e\times)-(\phi^e\times)(\omega_{ie}^e\times)\\
\approx& ((\tilde{C}_b^e\delta \omega_{ib}^b)\times)(I-\phi^e\times)-((\phi^e\times\omega_{ie}^e)\times)\\
\approx&(\tilde{C}_b^e\delta \omega_{ib}^b)\times-(\phi^e\times\omega_{ie}^e)\times=(\tilde{C}_b^e(\delta b_g^b+w_g^b))\times-(\phi^e\times\omega_{ie}^e)\times
\end{aligned}
\end{equation}
where the second order small quantity $\left((\tilde{C}_b^e\delta \omega_{ib}^b)\times\right)(\phi^e\times)$ is neglected.
Therefore, the equation(\ref{attitude_error_right}) can be simplified as
\begin{equation}\label{attitude_error_differential_right}
\dot{\phi}^e=\phi^e\times\omega_{ie}^e-\tilde{C}_b^e\delta b_g^b-\tilde{C}_b^ew_g^b={-\omega_{ie}^e\times\phi^e}-\tilde{C}_b^e\delta b_g^b-\tilde{C}_b^ew_g^b
\end{equation}

The differential equation of the velocity error state is given as
\begin{equation}\label{new_velocity_error_right}
\begin{aligned}
&\frac{d}{dt}(J\rho_{v}^e)=\frac{d}{dt}(\tilde{v}_{ib}^e-\tilde{C}_b^eC_e^bv_{ib}^e)=\dot{\tilde{v}}_{ib}^e -\tilde{C}_b^eC_e^b\dot{v}_{ib}^e-\frac{d}{dt}(\tilde{C}_b^eC_e^b)v_{ib}^e\\
=&\left[(-\tilde{\omega}_{ie}^e\times)\tilde{v}_{ib}^e+\tilde{C}_b^e\tilde{f}^b+\tilde{G}_{ib}^e\right] -\tilde{C}_b^eC_e^b\left[(-\omega_{ie}^e\times)v_{ib}^e+C_b^ef^b+G_{ib}^e\right]\\
&-\left(\tilde{C}_b^e(\delta \omega_{ib}^b\times)\tilde{C}_e^b\tilde{C}_b^eC_e^b-({\omega}_{ie}^e\times)\tilde{C}_b^eC_e^b+\tilde{C}_b^eC_e^b(\omega_{ie}^e\times)\right)v_{ib}^e\\
=&{\tilde{C}_b^e\delta f^b}-({\omega}_{ie}^e\times)(\tilde{v}_{ib}^e-\tilde{C}_b^eC_e^bv_{ib}^e)
-\left((\tilde{C}_b^e\delta \omega_{ib}^b)\times\right){\tilde{C}_b^eC_e^b v_{ib}^e}+\tilde{G}_{ib}^e-\tilde{C}_b^eC_e^bG_{ib}^e\\
\approx&\tilde{C}_b^e\delta f^b-({\omega}_{ie}^e\times)J\rho_v^e
-(\tilde{C}_b^e\delta \omega_{ib}^b)\times {(\tilde{v}_{ib}^e-J\rho_v^e)}+\tilde{G}_{ib}^e-(I-\phi^e\times)G_{ib}^e\\
\approx&-G_{ib}^e\times \phi^e{-({\omega}_{ie}^e\times)J\rho_v^e}+ \tilde{v}_{ib}^e\times(\tilde{C}_b^e\delta \omega_{ib}^b)+\tilde{C}_b^e\delta f^b+\tilde{G}_{ib}^e-G_{ib}^e\\
=&-G_{ib}^e\times \phi^e{-({\omega}_{ie}^e\times)J\rho_v^e}+ \tilde{v}_{ib}^e\times(\tilde{C}_b^e(\delta b_g^b+w_g^b))+\tilde{C}_b^e(\delta b_a^b+w_a^b)+\tilde{G}_{ib}^e-G_{ib}^e
\end{aligned}
\end{equation}
where the second order small quantity $(J\rho_v^e\times)(\tilde{C}_b^e\delta \omega_{ib}^b)$ is neglected; and as $G_{ib}^e$ can be approximated as constant, $\tilde{G}_{ib}^e-G_{ib}^e$ can also be neglected.

In the same way, the differential equation of the position error state is given as
\begin{equation}\label{new_position_error_right}
\begin{aligned}
&\frac{d}{dt}(J\rho_{r}^e)=\frac{d}{dt}(\tilde{r}_{ib}^e-\tilde{C}_b^eC_e^br_{ib}^e)=\dot{\tilde{r}}_{ib}^e -\tilde{C}_b^eC_e^b\dot{r}_{ib}^e - \frac{d}{dt}(\tilde{C}_b^eC_e^b)r_{ib}^e\\
=&\left[(-\tilde{\omega}_{ie}^e\times)\tilde{r}_{ib}^e+\tilde{v}_{ib}^e\right] -\tilde{C}_b^eC_e^b\left[(-\omega_{ie}^e\times)r_{ib}^e+v_{ib}^e\right]\\
&-\left(\tilde{C}_b^e(\delta \omega_{ib}^b\times)\tilde{C}_e^b\tilde{C}_b^eC_e^b-({\omega}_{ie}^e\times)\tilde{C}_b^eC_e^b+\tilde{C}_b^eC_e^b(\omega_{ie}^e\times)\right)r_{ib}^e\\
\approx&(-\tilde{\omega}_{ie}^e\times)(\tilde{r}_{ib}^e-\tilde{C}_b^eC_e^br_{ib}^e)+(\tilde{v}_{ib}^e-\tilde{C}_b^eC_e^bv_{ib}^e)-((\tilde{C}_b^e\delta \omega_{ib}^b)\times){\tilde{C}_b^eC_e^br_{ib}^e}\\
\approx &{(-\tilde{\omega}_{ie}^e\times)J\rho_r^e}+J\rho_v^e+((\tilde{C}_b^e\delta \omega_{ib}^b)\times) {(\tilde{r}_{ib}^e-J\rho_r^e)}\\
\approx&{(-\tilde{\omega}_{ie}^e\times)J\rho_r^e}+J\rho_v^e+\tilde{r}_{ib}^e\times(\tilde{C}_b^e\delta \omega_{ib}^b)\\
=&{(-{\omega}_{ie}^e\times)J\rho_r^e}+J\rho_v^e+\tilde{r}_{ib}^e\times(\tilde{C}_b^e(\delta b_g^b+w_g^b))
\end{aligned}
\end{equation}
where the second order small quantity $(J\rho_r^e\times)(C_b^e\delta \omega_{ib}^b)$ is neglected.

Thus, the error state $\delta x$, the error state transition matrix $F$, and the noise driven matrix $G$ of the inertial-integrated error state dynamic equation for equivariant filtering with estimated earth frame attitude are represented as
\begin{equation}\label{state_x_right}
\delta x=\begin{bmatrix}
\phi^e\\ J\rho_{v}^e \\J\rho_{r}^e \\\delta b_g^b \\\delta b_a^b
\end{bmatrix}, F=\begin{bmatrix}
-\omega_{ie}^e\times & 0 & 0& -\tilde{C}_b^e &0\\
-G_{ib}^e\times &-\omega_{ie}^e\times&0 & \tilde{v}_{ib}^e\times \tilde{C}_b^e& \tilde{C}_b^e\\
0&I&-\omega_{ie}^e\times&\tilde{r}_{ib}^e\times \tilde{C}_b^e&0\\
0&0&0&0&0\\
0&0&0&0&0
\end{bmatrix},G=\begin{bmatrix}
-\tilde{C}_b^e&0\\ \tilde{v}_{ib}^e\times \tilde{C}_b^e &\tilde{C}_b^e\\ \tilde{r}_{ib}^e\times \tilde{C}_b^e&0\\0&0\\0&0
\end{bmatrix}
\end{equation}
\begin{remark}
	As the rotation of the earth is taken into consideration, the error state transition function is slightly different from the error state transition function derived in~\cite{hartley2020contact}. This is reasonable as the rotation of the earth is not sensed in the low cost inertial sensor for applications such as inertial assisted SLAM.
\end{remark}

The right measurement Jacobian is in the form of·\cite{luo2021se23}
\begin{equation}\label{measurement_matrix_right}
H_r=\begin{bmatrix}
-(\tilde{r}_{ib}^e+C_b^el^b) \times & 0& I &0&0
\end{bmatrix}
\end{equation}

The feedback correction is corresponding to the error state definition in equation (\ref{attitude_velocity_position_right}) and can be 
termed as the inverse process. The feedback of attitude, velocity, and position can be easily obtained 
\begin{equation}\label{feedback_attitude_velocity_position}
\begin{aligned}
C_b^e&=\exp_G(\phi^e\times)\tilde{C}_b^e\\
v_{ib}^e&=\tilde{v}_{ib}^e-J \rho_v^e-\tilde{v}_{ib}^e\times \phi^e\\
r_{ib}^e&=\tilde{r}_{ib}^e-J \rho_r^e-\tilde{r}_{ib}^e\times  \phi^e
\end{aligned}
\end{equation}

\begin{remark}
	If the right invariant error is defined as $\eta^R=\mathcal{X}\tilde{\mathcal{X}}^{-1}$, the results are same.
\end{remark}

The left invariant error is defined as
\begin{equation}\label{left_invariant_error_estimated}
\eta^L=\tilde{\mathcal{X}}^{-1}\mathcal{X}=\begin{bmatrix}
\tilde{C}_e^b & -\tilde{v}_{ib}^b & -\tilde{r}_{ib}^b
\\ 0_{1\times 3}&1&0\\0_{1\times 3}&0&1
\end{bmatrix}\begin{bmatrix}
{C}_b^e & {v}_{ib}^e & {r}_{ib}^e\\
0_{1\times 3}&1&0\\0_{1\times 3}&0&1
\end{bmatrix}=\begin{bmatrix}
\tilde{C}_e^bC_b^e & \tilde{C}_e^bv_{ib}^e -\tilde{v}_{ib}^b& \tilde{C}_e^b{r}_{ib}^e-\tilde{r}_{ib}^b\\
0_{1\times 3}&1&0\\0_{1\times 3}&0&1
\end{bmatrix}
\end{equation}

Then the error defined on the Lie group can be converted to the Lie algebra and the Euclidean space. The new defined error state of attitude, velocity, and position can be defined as
\begin{equation}\label{attitude_velocity_position_estimated}
\begin{aligned}
\tilde{C}_e^b{C}_b^e =&\exp_G(\phi^b\times)\approx I+\phi^b\times\\
\eta_v^L=J\rho_{v}^b=&\tilde{C}_e^b{v}_{ib}^e -\tilde{v}_{ib}^b=\tilde{C}_e^b{v}_{ib}^e-\tilde{C}_e^b\tilde{v}_{ib}^e=\tilde{C}_e^b({v}_{ib}^e -\tilde{v}_{ib}^e)=-\tilde{C}_e^b\delta v_{ib}^e \\ 
\eta_r^L=J\rho_{r}^b=&\tilde{C}_e^b{r}_{ib}^e-\tilde{r}_{ib}^b =\tilde{C}_e^b{r}_{ib}^e-\tilde{C}_e^b\tilde{r}_{ib}^e=\tilde{C}_e^b({r}_{ib}^e -\tilde{r}_{ib}^e)=-\tilde{C}_e^b\delta r_{ib}^e
\end{aligned}
\end{equation}
where $\phi^b$ is the attitude error state, $J\rho_{v}^b$ is the new definition of velocity error state, $J\rho_{r}^b$ is the new definition of position error state;
$J$ is the left Jacobian matrix given in equation(\ref{left_Jacobian}).
The errors can be termed as common frame representation in the body frame~\cite{andrle2015attitude}.
The left-invariant error also satisfies that
\begin{equation}\label{lie_algebra_left_error_estimated}
\eta^L=\begin{bmatrix}
\exp_G(\phi^b\times) & J\rho_{v}^b & J\rho_{r}^b\\
0_{1\times 3} & 1& 0\\ 0_{1\times 3}&0&1
\end{bmatrix}=\exp_G\left(\begin{bmatrix}
(\phi^b)\times & \rho_{v}^b & \rho_{r}^b\\0_{1\times 3}&0&0\\0_{1\times 3}&0&0
\end{bmatrix} \right)=\exp_G\left(\Lambda\begin{bmatrix}
\phi^b \\ \rho_{v}^b \\ \rho_{r}^b
\end{bmatrix} \right)
\end{equation}

The differential equation of the attitude error state is given as
\begin{equation}\label{attitude_error_estimated}
\begin{aligned}
\frac{d}{dt}(\tilde{C}_e^b{C}_b^e)&=\dot{\tilde{C}}_e^b{C}_b^e+\tilde{C}_e^b\dot{{C}}_b^e\\
&=\left[\tilde{C}_e^b(\tilde{\omega}_{ie}^e\times)-(\tilde{\omega}_{ib}^b\times)\tilde{C}_e^b\right]{C}_b^e+\tilde{C}_e^b\left[{C}_b^e({\omega}_{ib}^b\times)-({\omega}_{ie}^e\times){C}_b^e\right]\\
&=\tilde{C}_e^b(\omega_{ie}^e\times){C}_b^e-(\tilde{\omega}_{ib}^b\times)\tilde{C}_e^b{C}_b^e+\tilde{C}_e^b{C}_b^e({\omega}_{ib}^b\times)-\tilde{C}_e^b({\omega}_{ie}^e\times){C}_b^e\\
&\approx -(\tilde{\omega}_{ib}^b\times)(I+\phi^b\times)+(I+\phi^b\times)((\tilde{\omega}_{ib}^b-\delta \omega_{ib}^b)\times)\\
&=-(\tilde{\omega}_{ib}^b\times)(\phi^b\times)-(\delta \omega_{ib}^b)\times+ (\phi^b\times)(\tilde{\omega}_{ib}^b\times)-\phi^b\times(\delta \omega_{ib}^b)\times \\
&\approx (\phi^b\times \tilde{\omega}_{ib}^b)\times-\delta \omega_{ib}^b\times= (\phi^b\times \tilde{\omega}_{ib}^b)\times-(\delta b_g^b+w_g^b)\times
\end{aligned}
\end{equation}
where the angular velocity error of the earth's rotation can be neglected, i.e., $\tilde{\omega}_{ie}^e=\omega_{ie}^e$; and second order small quantity $(\phi^b\times)(\delta \omega_{ib}^b\times)$ is also neglected. Therefore, the equation(\ref{attitude_error_estimated}) can be simplified as
\begin{equation}\label{attitude_error_differential_estimated}
\dot{\phi}^b=\phi^b\times \tilde{\omega}_{ib}^b-\delta \omega_{ib}^b=-\tilde{\omega}_{ib}^b \times \phi^b-\delta \omega_{ib}^b=-\tilde{\omega}_{ib}^b\times\phi^b-\delta b_g^b-w_g^b
\end{equation}

The differential equation of the velocity error state is given as
\begin{equation}\label{new_velocity_error_estimated}
\begin{aligned}
&\frac{d}{dt}(J\rho_{v}^b)=-\dot{\tilde{C}}_e^b\delta v_{ib}^e+\tilde{C}_e^b(\dot{{v}}_{ib}^e-\dot{\tilde{v}}_{ib}^e)\\
=&-\left[\tilde{C}_e^b(\tilde{\omega}_{ie}^e\times)-(\tilde{\omega}_{ib}^b\times)\tilde{C}_e^b\right]\delta v_{ib}^e\\
&+\tilde{C}_e^b\left(\left[(-\omega_{ie}^e\times)v_{ib}^e+C_b^ef^b+G_{ib}^e\right]- \left[(-\tilde{\omega}_{ie}^e\times)\tilde{v}_{ib}^e+\tilde{C}_b^e\tilde{f}^b+\tilde{G}_{ib}^e\right]\right)\\
=&-\tilde{C}_e^b(\omega_{ie}^e\times)\delta v_{ib}^e+(\tilde{\omega}_{ib}^b\times){red}{\tilde{C}_e^b\delta v_{ib}^e}+\tilde{C}_e^b{C}_b^e{f}^b-\tilde{f}^b-\tilde{C}_e^b\omega_{ie}^e\times({v}_{ib}^e-\tilde{v}_{ib}^e)\\
&+\tilde{C}_e^b({G}_{ib}^e-\tilde{G}_{ib}^e)\\
\approx&-(\tilde{\omega}_{ib}^b\times)J\rho_{v}^b+(I+\phi^b\times)(\tilde{f}^b-\delta b_a^b-w_a^b)-\tilde{f}^b+\tilde{C}_e^b({G}_{ib}^e-\tilde{G}_{ib}^e)\\
=&-(\tilde{\omega}_{ib}^b\times)J\rho_{v}^b+\phi^b\times \tilde{f}^b-\phi^b\times\delta f^b +\tilde{C}_e^b({G}_{ib}^e-\tilde{G}_{ib}^e)-\delta f^b \\
\approx & -(\omega_{ib}^b\times)J\rho_{v}^b-f^b\times \phi^b+\tilde{C}_e^b({G}_{ib}^e-\tilde{G}_{ib}^e)+\delta f^b\\
= & -(\omega_{ib}^b\times)J\rho_{v}^b-f^b\times \phi^b+\tilde{C}_e^b({G}_{ib}^e-\tilde{G}_{ib}^e)-\delta b_a^b-w_a^b
\end{aligned}
\end{equation}
where the second order small quantity $\phi^b\times\delta f^b $ is neglected; and as $G_{ib}^e$ can be approximated as constant, $\tilde{C}_e^b({G}_{ib}^e-\tilde{G}_{ib}^e)$ can also be neglected.

In the same way, the differential equation of the position error state is given as
\begin{equation}\label{new_position_error_estimated}
\begin{aligned}
&\frac{d}{dt}(J\rho_{r}^b)=-\dot{\tilde{C}}_e^b\delta r_{ib}^e+\tilde{C}_e^b(\dot{{r}}_{ib}^e-\dot{\tilde{r}}_{ib}^e)\\
=&-\left[\tilde{C}_e^b(\omega_{ie}^e\times)-(\tilde{\omega}_{ib}^b\times)\tilde{C}_e^b\right]\delta r_{ib}^e
+\tilde{C}_e^b\left(\left[(-\omega_{ie}^e\times)r_{ib}^e+v_{ib}^e\right]- \left[(-\tilde{\omega}_{ie}^e\times)\tilde{r}_{ib}^e+\tilde{v}_{ib}^e\right]\right)\\
=&-\tilde{C}_e^b(\omega_{ie}^e\times)\delta r_{ib}^e+(\tilde{\omega}_{ib}^b\times){red}{\tilde{C}_e^b\delta r_{ib}^e}-\tilde{C}_e^b\omega_{ie}^e\times({r}_{ib}^e-\tilde{r}_{ib}^e)+\tilde{C}_e^b({v}_{ib}^e-\tilde{v}_{ib}^e)\\
=&-\tilde{\omega}_{ib}^b\times J\rho_{r}^b+J\rho_{v}^b
\end{aligned}
\end{equation}

Thus, the inertial-integrated error state dynamic equation for the $SE_2(3)$ based EKF can be obtained 
\begin{equation}\label{invariant_ekf_estimated}
\delta\dot{x}=F\delta x+Gw
\end{equation}
where $F$ is the error state transition matrix; $\delta x$ is the error state including the terms about bias; G is the noise driven matrix. Their definition is given as
\begin{equation}\label{state_x_estimated}
\begin{aligned}
&\delta x=\begin{bmatrix}
\phi^b\\ J\rho_{v}^b \\J\rho_{r}^b \\\delta b_g^b \\\delta b_a^b
\end{bmatrix}, 
F=\begin{bmatrix}
-\tilde{\omega}_{ib}^b\times & 0 & 0& -I_{3\times 3} &0\\
-\tilde{f}^b\times &-\tilde{\omega}_{ib}^b\times&0 & 0& -I_{3\times 3}\\
0&I_{3\times 3}&-\tilde{\omega}_{ib}^b\times&0&0\\
0&0&0&0&0\\
0&0&0&0&0
\end{bmatrix},\\
&
G=\begin{bmatrix}
-I_{3\times 3}&0&0&0\\0&-I_{3\times 3}&0&0\\0&0&0&0\\0&0&I_{3\times 3}&0\\0&0&0&I_{3\times 3}
\end{bmatrix},
w=\begin{bmatrix}w_g^b\\w_a^b \\ w_{b_g}^b \\ w_{b_a}^b\end{bmatrix}
\end{aligned}
\end{equation}

The left measurement Jacobian is in the form of~\cite{luo2021se23}
\begin{equation}\label{measurement_matrix_left}
H_l=\begin{bmatrix}
-C_b^e(l^b\times) & 0& {C}_{b}^e &0&0
\end{bmatrix}
\end{equation} 
\section{Analytic State Transition Matrix}
It has been proved that the observability of the Invariant EKF (InEKF) is consistent with the nonlinear system therein~\cite{hartley2020contact}. 
Thus, the inconsistency caused by the linearization points in the IEKF-based algorithm can be avoided in the traditional EKF.
The observability of the equivalent filtering in the transformed earth frame is the same as that of IEKF when the error is left invariant~\cite{hartley2020contact}. However, the rotation of the earth is not taken into consideration when the error is right invariant and the detailed derivations of the left invariant error and right invariant error are not given in \cite{hartley2020contact}.
Therefore, we give the detailed derivations both of the left invariant error and right invariant error for the equivariant filtering here.

The analytic closed-form error-state transition matrix $\Phi(t_{k+1},t_{k})$ is obtained by integrating the following differential equation over the time interval $[t_{k},t_{k+1}]$ 
\begin{equation}\label{Phi_transition_matrix}
\dot{\Phi}(t_{k+1},t_{k})=F\Phi(t_{k+1},t_{k})
\end{equation}
with initial condition $\Phi(t_{k},t_{k})=I_{15}$.

Then, a useful auxiliary function introduce by ~\cite{bloesch2013state} is given
\begin{equation}\label{Gamma}
\Gamma_m(\phi)\triangleq \sum_{n=0}^{\infty}\frac{1}{(n+m)!}(\phi_{\wedge}^n)
\end{equation}
Then the integrals can be easily expressed and computed by the matrix Taylor series 
\begin{equation}\label{Gamma_0}
\Gamma_0(\phi)=I_3+\frac{\sin||\phi||}{||\phi||}\phi_{\wedge}+\frac{1-\cos||\phi||}{||\phi||^2}\phi_{\wedge}^2=R
\end{equation}
\begin{equation}\label{Gamma_1}
\Gamma_1=I_3+\frac{1-\cos||\phi||}{||\phi||^2}\phi_{\wedge}+\frac{||\phi||-\sin||\phi||}{||\phi||^3}\phi_{\wedge}^2=J_l
\end{equation}
It is worth noting that $\Gamma_0(\phi)$ is the exponential mapping of $SO(3)$, while $\Gamma_1(\phi)$ is the left Jacobian of $SO(3)$~\cite{hartley2020contact}.

Since we have  $\Gamma_2(\phi)\phi_{\wedge}+I_3=\Gamma_1(\phi)$ and $\Gamma_2(\phi)$ can be represented as $\Gamma_2(\phi)=\frac{1}{2!}I_3+x\phi_{\wedge}+y\phi_{\wedge}^2$, then $x$ and $y$ can be obtained by determined coefficient method:
\begin{equation}\label{Gamma_2}
\Gamma_2(\phi)=\frac{1}{2}I_3+\frac{||\phi||-\sin||\phi||}{||\phi||^3}\phi_{\wedge}+\frac{||\phi||^2+2\cos||\phi||-2}{2||\phi||^4}\phi_{\wedge}^2
\end{equation}

Similarly, as we have $\Gamma_3(\phi)\phi_{\wedge}+\frac{1}{2}I_3=\Gamma_2(\phi)$ and $\Gamma_3(\phi)$ can be represented as $\Gamma_3(\phi)=\frac{1}{3!}I_3+a\phi_{\wedge}+b\phi_{\wedge}^2$, then $a$ and $b$ can be obtained by determined coefficient method:
\begin{equation}\label{Gamma_3}
\Gamma_3(\phi)=\frac{1}{3!}I_3+\frac{||\phi||^2+2\cos||\phi||-2}{2||\phi||^4}\phi_{\wedge}+\frac{||\phi||^3-6||\phi||+6\sin||\phi||}{6||\phi||^5}\phi_{\wedge}^2
\end{equation}
Therefore
\begin{equation}\label{one_order_integral}
\int_{t_k}^{t_{k+1}}\exp_G((\omega t)_{\wedge})dt=\int_{t_k}^{t_{k+1}}\Gamma_0(\omega t)dt=\left(\sum_{n=0}^{\infty}\frac{1}{(n+1)!}(\omega\Delta t)_{\wedge}^n\right)\Delta t=\Gamma_1(\omega\Delta t)\Delta t
\end{equation}
\begin{equation}\label{two_order_integral}
\begin{aligned}
&\int_{t_k}^{t_{k+1}}\int_{t_k}^{t_k+t_2}\exp_G((\omega t_1)_{\wedge})dt_1dt_2=\int_{t_k}^{t_{k+1}}\Gamma_1(\omega t_2)t_2dt_2\\
=&\left(\sum_{n=0}^{\infty}\frac{1}{(n+2)!}(\omega\Delta t)_{\wedge}^n\right)\Delta t^2
=\Gamma_2(\omega\Delta t)\Delta t^2
\end{aligned}
\end{equation}
\begin{equation}\label{three_order_integral}
\begin{aligned}
&\int_{t_k}^{t_{k+1}}\int_{t_k}^{t_k+t_3}\int_{t_k}^{t_k+t_2}\exp_G((\omega t_1)_{\wedge})dt_1dt_2dt_3=\int_{t_k}^{t_{k+1}}\int_{t_k}^{t_k+t_3}\Gamma_1(\omega t_2)t_2dt_2dt_3\\
=&\int_{t_k}^{t_{k+1}}\Gamma_2(\omega t_3)t_3^2dt_3
=\left(\sum_{n=0}^{\infty}\frac{1}{(n+3)!}(\omega\Delta t)_{\wedge}^n\right)\Delta t^3=\Gamma_3(\omega\Delta t)\Delta t^3
\end{aligned}
\end{equation}

The right invariant error dynamics matrix depends on the estimated attitude $\tilde{C}_b^e$, the estimated $\tilde{v}_{ib}^e$, and the estimated position $\tilde{r}_{ib}^e$. Therefore, it is not independent of the system's trajectory and will change between the times $t_k$ and $t_{k+1}$.
The closed-form of the analytical solution for the right invariant error can be written as
\begin{equation}\label{Phi_transition_matrix_explicit}
\Phi^r(t_{k+1},t_{k})=\begin{bmatrix}
\Phi_{11}^r(t_{k+1},t_{k}) & 0 &0&\Phi_{14}^r(t_{k+1},t_{k}) &0\\
\Phi_{21}^r(t_{k+1},t_{k}) & \Phi_{22}^r(t_{k+1},t_{k})& 0&\Phi_{24}^r(t_{k+1},t_{k})&\Phi_{25}^r(t_{k+1},t_{k})\\
\Phi_{31}^r(t_{k+1},t_{k}) & \Phi_{32}^r(t_{k+1},t_{k})& \Phi_{33}^r(t_{k+1},t_{k})&\Phi_{34}^r(t_{k+1},t_{k})&\Phi_{35}^r(t_{k+1},t_{k})\\
0&0&0&I&0\\
0&0&0&0&I
\end{bmatrix}
\end{equation}

For $\Phi_{11}^r(t_{k+1},t_{k})$
\begin{equation}\label{Phi_11_r}
\begin{aligned}
&\dot{\Phi}_{11}^r(t_{k+1},t_{k})=-\omega_{ie}^e\times\Phi_{11}^r(t_{k+1},t_{k})\\
\Rightarrow& \Phi_{11}^r(t_{k+1},t_{k})=\Phi_{11}^r(t_{k},t_{k})\exp\left(\int_{t_k}^{t_{k+1}}-\omega_{ie}^e\times ds\right)=\exp\left(\int_{t_k}^{t_{k+1}}-\omega_{ie}^e\times ds\right)\\
=&\Gamma_0^T(\omega_{ie}^e\Delta t)={C_e^i}^T=C_i^e
\end{aligned}
\end{equation}

For $\Phi_{14}^r(t_{k+1},t_{k})$
\begin{equation}\label{Phi_14_r}
\begin{aligned}
&\dot{\Phi}_{14}^r(t_{k+1},t_{k})=-\omega_{ie}^e\times\Phi_{14}^r(t_{k+1},t_{k})-\tilde{C}_b^e\\
\Rightarrow& \dot{\Phi}_{14}^r(t_{k+1},t_{k})+\omega_{ie}^e\times\Phi_{14}^r(t_{k+1},t_{k})=-\tilde{C}_b^e\\
\Rightarrow& \Gamma_0(\omega_{ie}^e\Delta t)\dot{\Phi}_{14}^r(t_{k+1},t_{k})+\Gamma_0(\omega_{ie}^e\Delta t)\omega_{ie}^e\times\Phi_{14}^r(t_{k+1},t_{k})=-\Gamma_0(\omega_{ie}^e\Delta t)\tilde{C}_b^e\\
\Rightarrow& \frac{d}{dt}\left(\Gamma_0(\omega_{ie}^e\Delta t) \Phi_{14}^r(t_{k+1},t_{k}) \right)=-C_e^i\tilde{C}_b^e=-\tilde{C}_b^i=-\Gamma_0(\tilde{\omega}_{ib}^b\Delta t)\\
\Rightarrow&\Gamma_0(\omega_{ie}^e\Delta t) \Phi_{14}^r(t_{k+1},t_{k})+\Phi_{14}^r(t_{k},t_{k}) =-\int_{t_k}^{t_{k+1}}\Gamma_0(\tilde{\omega}_{ib}^bs) ds\\
\Rightarrow& \Phi_{14}^r(t_{k+1},t_{k})=-\Gamma_0^T(\omega_{ie}^e\Delta t)\Gamma_1(\tilde{\omega}_{ib}^b\Delta t) \Delta t=-C_i^e\Gamma_1(\tilde{\omega}_{ib}^b\Delta t) \Delta t
\end{aligned}
\end{equation}

For $\Phi_{21}^r(t_{k+1},t_{k})$
\begin{equation}\label{Phi_21_r}
\begin{aligned}
&\dot{\Phi}_{21}^r(t_{k+1},t_{k})=-G_{ib}^e\times\Phi_{11}^r(t_{k+1},t_{k})-\omega_{ie}^e\times\Phi_{21}^r(t_{k+1},t_{k})\\
\Rightarrow& \dot{\Phi}_{21}^r(t_{k+1},t_{k})+\omega_{ie}^e\times\Phi_{21}^r(t_{k+1},t_{k})=-G_{ib}^e\times\Phi_{11}^r(t_{k+1},t_{k})=-G_{ib}^e\times \Gamma_0^T(\omega_{ie}^e\Delta t)\\
\Rightarrow& \Gamma_0(\omega_{ie}^e\Delta t)\dot{\Phi}_{21}^r(t_{k+1},t_{k})+\Gamma_0(\omega_{ie}^e\Delta t)\omega_{ie}^e\times\Phi_{21}^r(t_{k+1},t_{k})=-\Gamma_0(\omega_{ie}^e\Delta t)G_{ib}^e\times\Gamma_0^T(\omega_{ie}^e\Delta t)\\
&=-\left(\Gamma_0(\omega_{ie}^e\Delta t)G_{ib}^e\right)\times\\
\Rightarrow& \frac{d}{dt}\left(\Gamma_0(\omega_{ie}^e\Delta t){\Phi}_{21}^r(t_{k+1},t_{k}) \right)=-\left(\Gamma_0(\omega_{ie}^e\Delta t)G_{ib}^e\right)\times=-(C_e^iG_{ib}^e)\times=-G_{ib}^i\times\\
\Rightarrow& \Gamma_0(\omega_{ie}^e\Delta t){\Phi}_{21}^r(t_{k+1},t_{k})+{\Phi}_{21}^r(t_{k},t_{k})=-\int_{t_k}^{t_{k+1}}\left(\Gamma_0(\omega_{ie}^es)G_{ib}^e\right)\times ds=-\int_{t_k}^{t_{k+1}} (G_{ib}^i\times) ds\\
\Rightarrow&{\Phi}_{21}^r(t_{k+1},t_{k})=-\Gamma_0^T(\omega_{ie}^e\Delta t)\left(\Gamma_1(\omega_{ie}^e\Delta t)G_{ib}^e\right)\times \Delta t\\
&=-\Gamma_0^T(\omega_{ie}^e\Delta t)(G_{ib}^i\times) \Delta t=-C_i^e(G_{ib}^i\times) \Delta t
\end{aligned}
\end{equation}

For $\Phi_{22}^r(t_{k+1},t_{k})$
\begin{equation}\label{Phi_22_r}
\begin{aligned}
&\dot{\Phi}_{22}^r(t_{k+1},t_{k})=-\omega_{ie}^e\times\Phi_{22}^r(t_{k+1},t_{k})\\
\Rightarrow& \Phi_{22}^r(t_{k+1},t_{k})=\Phi_{22}^r(t_{k},t_{k})\exp\left(\int_{t_k}^{t_{k+1}}-\omega_{ie}^e\times ds\right)=\exp\left(\int_{t_k}^{t_{k+1}}-\omega_{ie}^e\times ds\right)\\
&=\Gamma_0^T(\omega_{ie}^e\Delta t)={C_e^i}^T=C_i^e
\end{aligned}
\end{equation}

For $\Phi_{24}^r(t_{k+1},t_{k})$
\begin{equation}\label{Phi_24_r}
\begin{aligned}
&\dot{\Phi}_{24}^r(t_{k+1},t_{k})=-G_{ib}^e\times\Phi_{14}^r(t_{k+1},t_{k})-\omega_{ie}^e\times\Phi_{24}^r(t_{k+1},t_{k})+\tilde{v}_{ib}^e\times \tilde{C}_b^e\\
\Rightarrow&\dot{\Phi}_{24}^r(t_{k+1},t_{k})+\omega_{ie}^e\times\Phi_{24}^r(t_{k+1},t_{k})=-G_{ib}^e\times\left(-\Gamma_0^T(\omega_{ie}^e\Delta t)\Gamma_1(\tilde{\omega}_{ib}^b\Delta t) \Delta t\right)+\tilde{v}_{ib}^e\times \tilde{C}_b^e\\
\Rightarrow&\Gamma_0(\omega_{ie}^e\Delta t)\dot{\Phi}_{24}^r(t_{k+1},t_{k})+\Gamma_0(\omega_{ie}^e\Delta t)\omega_{ie}^e\times\Phi_{24}^r(t_{k+1},t_{k})\\
&=\Gamma_0(\omega_{ie}^e\Delta t)G_{ib}^e\times\left(\Gamma_0^T(\omega_{ie}^e\Delta t)\Gamma_1(\tilde{\omega}_{ib}^b\Delta t) \Delta t\right)+\Gamma_0(\omega_{ie}^e\Delta t)\tilde{v}_{ib}^e\times \tilde{C}_b^e\\
\Rightarrow&\frac{d}{dt}\left( \Gamma_0(\omega_{ie}^e\Delta t){\Phi}_{24}^r(t_{k+1},t_{k})  \right)=\left(\Gamma_0(\omega_{ie}^e\Delta t)G_{ib}^e \right)\times \Gamma_1(\tilde{\omega}_{ib}^b\Delta t) \Delta t+\Gamma_0(\omega_{ie}^e\Delta t)\tilde{v}_{ib}^e\times \tilde{C}_b^e\\
&=G_{ib}^i\times\Gamma_1(\tilde{\omega}_{ib}^b\Delta t) \Delta t+\Gamma_0(\tilde{\omega}_{ib}^b\Delta t)(\tilde{v}_{ib}^b\times)\\
\Rightarrow& \Gamma_0(\omega_{ie}^e\Delta t){\Phi}_{24}^r(t_{k+1},t_{k})+ {\Phi}_{24}^r(t_{k},t_{k}) =(G_{ib}^i\times)\int_{t_k}^{t_{k+1}} \Gamma_1(\tilde{\omega}_{ib}^bs)sds+\int_{t_k}^{t_{k+1}}\Gamma_0(\tilde{\omega}_{ib}^b s)ds(\tilde{v}_{ib}^b\times)\\
\Rightarrow&{\Phi}_{24}^r(t_{k+1},t_{k})=\Gamma_0^T(\omega_{ie}^e\Delta t)(G_{ib}^i\times)\Gamma_2(\tilde{\omega}_{ib}^b\Delta t) \Delta t^2+\Gamma_0^T(\omega_{ie}^e\Delta t)\Gamma_1(\tilde{\omega}_{ib}^b\Delta t)\Delta t(\tilde{v}_{ib}^b\times)\\
&=C_i^e(G_{ib}^i\times)\Gamma_2(\tilde{\omega}_{ib}^b\Delta t) \Delta t^2+C_i^e\Gamma_1(\tilde{\omega}_{ib}^b\Delta t)\Delta t(\tilde{v}_{ib}^b\times)
\end{aligned}
\end{equation}

For $\Phi_{25}^r(t_{k+1},t_{k})$
\begin{equation}\label{Phi_25_r}
\begin{aligned}
&\dot{\Phi}_{25}^r(t_{k+1},t_{k})=-\omega_{ie}^e\times\Phi_{25}^r(t_{k+1},t_{k})+\tilde{C}_b^e\\
\Rightarrow&\dot{\Phi}_{25}^r(t_{k+1},t_{k})+\omega_{ie}^e\times\Phi_{25}^r(t_{k+1},t_{k})= \tilde{C}_b^e\\
\Rightarrow&\Gamma_0(\omega_{ie}^e\Delta t)\dot{\Phi}_{25}^r(t_{k+1},t_{k})+\Gamma_0(\omega_{ie}^e\Delta t)\omega_{ie}^e\times\Phi_{25}^r(t_{k+1},t_{k})=\Gamma_0(\omega_{ie}^e\Delta t) \tilde{C}_b^e=\Gamma_0(\omega_{ib}^b\Delta t)\\
\Rightarrow&\frac{d}{dt}\left( \Gamma_0(\omega_{ie}^e\Delta t){\Phi}_{24}^r(t_{k+1},t_{k})  \right)=\Gamma_0(\omega_{ib}^b\Delta t)\\
\Rightarrow&\Gamma_0(\omega_{ie}^e\Delta t){\Phi}_{25}^r(t_{k+1},t_{k})+{\Phi}_{25}^r(t_{k},t_{k})=\int_{t_k}^{t_{k+1}}\Gamma_0(\omega_{ib}^bs) ds\\
\Rightarrow&{\Phi}_{25}^r(t_{k+1},t_{k})=\Gamma_0^T(\omega_{ie}^e\Delta t)\Gamma_1(\tilde{\omega}_{ib}^b\Delta t)\Delta t=C_i^e \Gamma_1(\tilde{\omega}_{ib}^b\Delta t)\Delta t
\end{aligned}
\end{equation}

For $\Phi_{31}^r(t_{k+1},t_{k})$
\begin{equation}\label{Phi31_r}
\begin{aligned}
&\dot{\Phi}_{31}^r(t_{k+1},t_{k})=\Phi_{21}^r(t_{k+1},t_{k})-\omega_{ie}^e\times\Phi_{31}^r(t_{k+1},t_{k})\\
\Rightarrow& \dot{\Phi}_{31}^r(t_{k+1},t_{k})+\omega_{ie}^e\times\Phi_{31}^r(t_{k+1},t_{k})=\Phi_{21}^r(t_{k+1},t_{k})=-C_i^e(G_{ib}^i\times) \Delta t\\
\Rightarrow& C_e^i\dot{\Phi}_{31}^r(t_{k+1},t_{k})+C_e^i\omega_{ie}^e\times\Phi_{31}^r(t_{k+1},t_{k})=-(G_{ib}^i\times) \Delta t\\
\Rightarrow& \frac{d}{dt}\left(C_e^i{\Phi}_{31}^r(t_{k+1},t_{k}) \right)=-(G_{ib}^i\times) \Delta t\\
\Rightarrow& C_e^i{\Phi}_{31}^r(t_{k+1},t_{k})+{\Phi}_{31}^r(t_{k},t_{k})=-(G_{ib}^i\times)\int_{t_k}^{t_{k+1}} Is ds=-(G_{ib}^i\times)\frac{1}{2}\Delta t^2\\
\Rightarrow& {\Phi}_{31}^r(t_{k+1},t_{k})=-C_i^e\frac{1}{2}(G_{ib}^i\times)\Delta t^2
\end{aligned}
\end{equation}

For $\Phi_{32}^r(t_{k+1},t_{k})$
\begin{equation}\label{Phi32_r}
\begin{aligned}
&\dot{\Phi}_{32}^r(t_{k+1},t_{k})=\Phi_{22}^r(t_{k+1},t_{k})-\omega_{ie}^e\times\Phi_{32}^r(t_{k+1},t_{k})\\
\Rightarrow& \dot{\Phi}_{32}^r(t_{k+1},t_{k})+\omega_{ie}^e\times\Phi_{32}^r(t_{k+1},t_{k})=\Phi_{22}^r(t_{k+1},t_{k})=C_i^e\\
\Rightarrow& C_e^i\dot{\Phi}_{31}^r(t_{k+1},t_{k})+C_e^i\omega_{ie}^e\times\Phi_{31}^r(t_{k+1},t_{k})=I\\
\Rightarrow& \frac{d}{dt}\left(C_e^i{\Phi}_{31}^r(t_{k+1},t_{k}) \right)=I \\
\Rightarrow& {\Phi}_{31}^r(t_{k+1},t_{k})=C_i^e\Delta t
\end{aligned}
\end{equation}

For $\Phi_{33}^r(t_{k+1},t_{k})$
\begin{equation}\label{Phi33_r}
\begin{aligned}
&\dot{\Phi}_{33}^r(t_{k+1},t_{k})=-\omega_{ie}^e\times\Phi_{33}^r(t_{k+1},t_{k})\\
\Rightarrow& \Phi_{33}^r(t_{k+1},t_{k})=\Phi_{33}^r(t_{k},t_{k})\exp\left(\int_{t_k}^{t_{k+1}}-\omega_{ie}^e\times ds\right)=\exp\left(\int_{t_k}^{t_{k+1}}-\omega_{ie}^e\times ds\right)\\
&=\Gamma_0^T(\omega_{ie}^e\Delta t)={C_e^i}^T=C_i^e
\end{aligned}
\end{equation}

For $\Phi_{34}^r(t_{k+1},t_{k})$
\begin{equation}\label{Phi_34_r}
\begin{aligned}
&\dot{\Phi}_{34}^r(t_{k+1},t_{k})=\Phi_{24}^r(t_{k+1},t_{k})-\omega_{ie}^e\times\Phi_{34}^r(t_{k+1},t_{k})+\tilde{p}_{ib}^e\times \tilde{C}_b^e\\
\Rightarrow&\dot{\Phi}_{34}^r(t_{k+1},t_{k})+\omega_{ie}^e\times\Phi_{34}^r(t_{k+1},t_{k})=\Phi_{24}^r(t_{k+1},t_{k})+\tilde{p}_{ib}^e\times \tilde{C}_b^e\\
\Rightarrow&\Gamma_0(\omega_{ie}^e\Delta t)\dot{\Phi}_{34}^r(t_{k+1},t_{k})+\Gamma_0(\omega_{ie}^e\Delta t)\omega_{ie}^e\times\Phi_{34}^r(t_{k+1},t_{k})\\
&=\Gamma_0(\omega_{ie}^e\Delta t)\Phi_{24}^r(t_{k+1},t_{k})+\Gamma_0(\omega_{ie}^e\Delta t)\tilde{p}_{ib}^e\times \tilde{C}_b^e\\
\Rightarrow&\frac{d}{dt}\left( \Gamma_0(\omega_{ie}^e\Delta t){\Phi}_{24}^r(t_{k+1},t_{k})  \right)=(G_{ib}^i\times)\Gamma_2(\tilde{\omega}_{ib}^b\Delta t) \Delta t^2+\Gamma_1(\tilde{\omega}_{ib}^b\Delta t)\Delta t(\tilde{v}_{ib}^b\times)+\Gamma_0(\tilde{\omega}_{ib}^b\Delta t)(\tilde{p}_{ib}^b\times)\\
\Rightarrow&{\Phi}_{34}^r(t_{k+1},t_{k})=\Gamma_0^T(\omega_{ie}^e\Delta t)(G_{ib}^i\times)\Gamma_3(\tilde{\omega}_{ib}^b\Delta t) \Delta t^3
+\Gamma_0^T(\omega_{ie}^e\Delta t)\Gamma_2(\tilde{\omega}_{ib}^b\Delta t)\Delta t^2(\tilde{v}_{ib}^b\times)\\
&+\Gamma_0^T(\omega_{ie}^e\Delta t)\Gamma_1(\tilde{\omega}_{ib}^b\Delta t)\Delta t(\tilde{p}_{ib}^b\times)\\
&=C_i^e(G_{ib}^i\times)\Gamma_3(\tilde{\omega}_{ib}^b\Delta t) \Delta t^3+C_i^e\Gamma_2(\tilde{\omega}_{ib}^b\Delta t)\Delta t^2(\tilde{v}_{ib}^b\times)+C_i^e\Gamma_1(\tilde{\omega}_{ib}^b\Delta t)\Delta t(\tilde{p}_{ib}^b\times)
\end{aligned}
\end{equation}

For $\Phi_{35}^r(t_{k+1},t_{k})$
\begin{equation}\label{Phi_35_r}
\begin{aligned}
&\dot{\Phi}_{35}^r(t_{k+1},t_{k})=\Phi_{25}^r(t_{k+1},t_{k})-\omega_{ie}^e\times\Phi_{35}^r(t_{k+1},t_{k})\\
\Rightarrow&\dot{\Phi}_{35}^r(t_{k+1},t_{k})+\omega_{ie}^e\times\Phi_{35}^r(t_{k+1},t_{k})=\Phi_{25}^r(t_{k+1},t_{k})\\
\Rightarrow&\Gamma_0(\omega_{ie}^e\Delta t)\dot{\Phi}_{35}^r(t_{k+1},t_{k})+\Gamma_0(\omega_{ie}^e\Delta t)\omega_{ie}^e\times\Phi_{35}^r(t_{k+1},t_{k})\\
&=\Gamma_0(\omega_{ie}^e\Delta t)C_i^e \Gamma_1(\tilde{\omega}_{ib}^b\Delta t)\Delta t=\Gamma_1(\tilde{\omega}_{ib}^b\Delta t)\Delta t\\
\Rightarrow&\frac{d}{dt}\left( \Gamma_0(\omega_{ie}^e\Delta t){\Phi}_{35}^r(t_{k+1},t_{k})  \right)=\Gamma_1(\tilde{\omega}_{ib}^b\Delta t)\Delta t\\
\Rightarrow& \Gamma_0(\omega_{ie}^e\Delta t){\Phi}_{35}^r(t_{k+1},t_{k}) +{\Phi}_{35}^r(t_{k},t_{k})=\int_{t_k}^{t_{k+1}}\Gamma_1(\tilde{\omega}_{ib}^bs)sds\\
\Rightarrow&{\Phi}_{35}^r(t_{k+1},t_{k})=\Gamma_0^T(\omega_{ie}^e\Delta t)\Gamma_2(\tilde{\omega}_{ib}^b\Delta t)\Delta t^2=C_i^e\Gamma_2(\tilde{\omega}_{ib}^b\Delta t)\Delta t^2
\end{aligned}
\end{equation}

On the other hand, the left invariant error dynamics matrix in equation (\ref{state_x_estimated}) only depends on the acceleration and gyroscope readings and the estimated bias terms and can be assumed to be constant between times $t_k$ and $t_{k+1}$.
Therefore, the closed-form of the analytical solution for the left invariant error can be written as
\begin{equation}\label{Phi_transition_matrix_explicit_left}
\Phi^l(t_{k+1},t_{k})=\begin{bmatrix}
\Phi_{11}^l(t_{k+1},t_{k}) & 0 &0&\Phi_{14}^l(t_{k+1},t_{k}) &0\\
\Phi_{21}^l(t_{k+1},t_{k}) & \Phi_{22}^l(t_{k+1},t_{k})& 0&\Phi_{24}^l(t_{k+1},t_{k})&\Phi_{25}^l(t_{k+1},t_{k})\\
\Phi_{31}^l(t_{k+1},t_{k}) & \Phi_{32}^l(t_{k+1},t_{k})& \Phi_{33}^l(t_{k+1},t_{k})&\Phi_{34}^l(t_{k+1},t_{k})&\Phi_{35}^l(t_{k+1},t_{k})\\
0&0&0&I&0\\
0&0&0&0&I
\end{bmatrix}
\end{equation}
where the individual terms can be analytically calculated in details as following.

For $\Phi_{11}^l(t_{k+1},t_{k})$
\begin{equation}\label{Phi_11_l}
\begin{aligned}
&\dot{\Phi}_{11}^l(t_{k+1},t_{k})=-\tilde{\omega}_{ib}^b\times\Phi_{11}^l(t_{k+1},t_{k})\\
\Rightarrow& \Phi_{11}^l(t_{k+1},t_{k})=\Phi_{11}^l(t_{k},t_{k})\exp\left(\int_{t_k}^{t_{k+1}}-\tilde{\omega}_{ib}^b\times ds\right)=\exp\left(\int_{t_k}^{t_{k+1}}-\tilde{\omega}_{ib}^b\times ds\right)\\
&=\Gamma_0^T(\tilde{\omega}_{ib}^b\Delta t)
\end{aligned}
\end{equation}

For $\Phi_{14}^l(t_{k+1},t_{k})$
\begin{equation}\label{Phi_14_l}
\begin{aligned}
&\dot{\Phi}_{14}^l(t_{k+1},t_{k})=-\tilde{\omega}_{ib}^b\times\Phi_{14}^l(t_{k+1},t_{k})-I\\
\Rightarrow& \dot{\Phi}_{14}^l(t_{k+1},t_{k})+\tilde{\omega}_{ib}^b\times\Phi_{14}^l(t_{k+1},t_{k})=-I\\
\Rightarrow& \Gamma_0(\tilde{\omega}_{ib}^b\Delta t)\dot{\Phi}_{14}^l(t_{k+1},t_{k})+\Gamma_0(\tilde{\omega}_{ib}^b\Delta t)\tilde{\omega}_{ib}^b\times\Phi_{14}^l(t_{k+1},t_{k})=-\Gamma_0(\tilde{\omega}_{ib}^b\Delta t)\\
\Rightarrow& \frac{d}{dt}\left(\Gamma_0(\tilde{\omega}_{ib}^b\Delta t)\Phi_{14}^l(t_{k+1},t_{k})\right)=-\Gamma_0(\tilde{\omega}_{ib}^b\Delta t)\\
\Rightarrow&\Gamma_0(\tilde{\omega}_{ib}^b\Delta t)\Phi_{14}^l(t_{k+1},t_{k})+\Phi_{14}^l(t_{k},t_{k})=-\int_{t_k}^{t_{k+1}} \Gamma_0(\tilde{\omega}_{ib}^bs) ds=-\Gamma_1(\tilde{\omega}_{ib}^b\Delta t)\Delta t\\
\Rightarrow&\Phi_{14}^l(t_{k+1},t_{k})=-\Gamma_0^T(\tilde{\omega}_{ib}^b\Delta t)\Gamma_1(\tilde{\omega}_{ib}^b\Delta t)\Delta t
\end{aligned}
\end{equation}

For $\Phi_{21}^l(t_{k+1},t_{k})$
\begin{equation}\label{Phi_21_l}
\begin{aligned}
&\dot{\Phi}_{21}^l(t_{k+1},t_{k})=-\tilde{f}^b\times\Phi_{11}^l(t_{k+1},t_{k})-\tilde{\omega}_{ib}^b\times\Phi_{21}^l(t_{k+1},t_{k})\\
\Rightarrow& \dot{\Phi}_{21}^l(t_{k+1},t_{k})+\tilde{\omega}_{ib}^b\times\Phi_{21}^l(t_{k+1},t_{k})=-\tilde{f}^b\times\Phi_{11}^l(t_{k+1},t_{k})=-(\tilde{f}^b\times) \Gamma_0^T(\tilde{\omega}_{ib}^b\Delta t)\\
\Rightarrow& \Gamma_0(\tilde{\omega}_{ib}^b\Delta t)\dot{\Phi}_{21}^l(t_{k+1},t_{k})+\Gamma_0(\tilde{\omega}_{ib}^b\Delta t)\tilde{\omega}_{ib}^b\times\Phi_{21}^l(t_{k+1},t_{k})\\
&=-\Gamma_0(\tilde{\omega}_{ib}^b\Delta t)(\tilde{f}^b\times) \Gamma_0^T(\tilde{\omega}_{ib}^b\Delta t)=-(\Gamma_0(\tilde{\omega}_{ib}^b\Delta t)\tilde{f}^b)\times\\
\Rightarrow&  \frac{d}{dt}\left(\Gamma_0(\tilde{\omega}_{ib}^b\Delta t)\Phi_{21}^l(t_{k+1},t_{k})\right)=-(\Gamma_0(\tilde{\omega}_{ib}^b\Delta t)\tilde{f}^b)\times\\
\Rightarrow& \Gamma_0(\tilde{\omega}_{ib}^b\Delta t)\Phi_{21}^l(t_{k+1},t_{k})+\Phi_{21}^l(t_{k},t_{k})=-\int_{t_k}^{t_{k+1}}(\Gamma_0(\tilde{\omega}_{ib}^b s)\tilde{f}^b)\times ds\\
\Rightarrow&\Phi_{21}^l(t_{k+1},t_{k})= -\Gamma_0^T(\tilde{\omega}_{ib}^b\Delta t)(\Gamma_1(\tilde{\omega}_{ib}^b\Delta t)\tilde{f}^b)\times \Delta t
\end{aligned}
\end{equation}

For $\Phi_{22}^l(t_{k+1},t_{k})$
\begin{equation}\label{Phi_22_l}
\begin{aligned}
&\dot{\Phi}_{22}^l(t_{k+1},t_{k})=-\tilde{\omega}_{ib}^b\times\Phi_{22}^l(t_{k+1},t_{k})\\
\Rightarrow& \Phi_{22}^l(t_{k+1},t_{k})=\Phi_{22}^l(t_{k},t_{k})\exp\left(\int_{t_k}^{t_{k+1}}-\tilde{\omega}_{ib}^b\times ds\right)=\exp\left(\int_{t_k}^{t_{k+1}}-\tilde{\omega}_{ib}^b\times ds\right)\\
&=\Gamma_0^T(\tilde{\omega}_{ib}^b\Delta t)
\end{aligned}
\end{equation}

For $\Phi_{24}^l(t_{k+1},t_{k})$
\begin{equation}\label{Phi_24_l}
\begin{aligned}
&\dot{\Phi}_{24}^l(t_{k+1},t_{k})=-\tilde{f}^b\times\Phi_{14}^l(t_{k+1},t_{k})-\tilde{\omega}_{ib}^b\times\Phi_{24}^l(t_{k+1},t_{k})\\
\Rightarrow& \dot{\Phi}_{24}^l(t_{k+1},t_{k})+\tilde{\omega}_{ib}^b\times\Phi_{24}^l(t_{k+1},t_{k})=-\tilde{f}^b\times\Phi_{14}^l(t_{k+1},t_{k})\\
&=-(\tilde{f}^b\times) \left(-\Gamma_0^T(\tilde{\omega}_{ib}^b\Delta t)\Gamma_1(\tilde{\omega}_{ib}^b\Delta t)\Delta t \right)\\
\Rightarrow& \Gamma_0(\tilde{\omega}_{ib}^b\Delta t)\dot{\Phi}_{24}^l(t_{k+1},t_{k})+\Gamma_0(\tilde{\omega}_{ib}^b\Delta t)\tilde{\omega}_{ib}^b\times\Phi_{24}^l(t_{k+1},t_{k})\\
&=-\Gamma_0(\tilde{\omega}_{ib}^b\Delta t)(\tilde{f}^b\times) \left(-\Gamma_0^T(\tilde{\omega}_{ib}^b\Delta t)\Gamma_1(\tilde{\omega}_{ib}^b\Delta t)\Delta t \right)\\
&=(\Gamma_0(\tilde{\omega}_{ib}^b\Delta t)\tilde{f}^b)\times \Gamma_1(\tilde{\omega}_{ib}^b\Delta t)\Delta t\\
\Rightarrow&  \frac{d}{dt}\left(\Gamma_0(\tilde{\omega}_{ib}^b\Delta t)\Phi_{24}^l(t_{k+1},t_{k})\right)=(\Gamma_0(\tilde{\omega}_{ib}^b\Delta t)\tilde{f}^b)\times \Gamma_1(\tilde{\omega}_{ib}^b\Delta t)\Delta t\\
\Rightarrow& \Gamma_0(\tilde{\omega}_{ib}^b\Delta t)\Phi_{24}^l(t_{k+1},t_{k})+\Phi_{24}^l(t_{k},t_{k})=\int_{t_k}^{t_{k+1}}(\Gamma_0(\tilde{\omega}_{ib}^bs )\tilde{f}^b)\times \Gamma_1(\tilde{\omega}_{ib}^bs)s ds\\
\Rightarrow&\Phi_{24}^l(t_{k+1},t_{k})= \Gamma_0^T(\tilde{\omega}_{ib}^b\Delta t)\Psi_1
\end{aligned}
\end{equation}
where $\Psi_1\triangleq \int_{t_k}^{t_{k+1}}(\Gamma_0(\tilde{\omega}_{ib}^bs)\tilde{f}^b)\times \Gamma_1(\tilde{\omega}_{ib}^bs)s ds$ is defined and calculated in~\cite{hartley2020contact}.

For $\Phi_{25}^l(t_{k+1},t_{k})$
\begin{equation}\label{Phi_25_l}
\begin{aligned}
&\dot{\Phi}_{25}^l(t_{k+1},t_{k})=-\tilde{\omega}_{ib}^b\times\Phi_{25}^l(t_{k+1},t_{k})-I\\
\Rightarrow& \dot{\Phi}_{25}^l(t_{k+1},t_{k})+\tilde{\omega}_{ib}^b\times\Phi_{25}^l(t_{k+1},t_{k})=-I\\
\Rightarrow& \Gamma_0(\tilde{\omega}_{ib}^b\Delta t)\dot{\Phi}_{25}^l(t_{k+1},t_{k})+\Gamma_0(\tilde{\omega}_{ib}^b\Delta t)\tilde{\omega}_{ib}^b\times\Phi_{25}^l(t_{k+1},t_{k})=-\Gamma_0(\tilde{\omega}_{ib}^b\Delta t)\\
\Rightarrow& \frac{d}{dt}\left(\Gamma_0(\tilde{\omega}_{ib}^b\Delta t)\Phi_{25}^l(t_{k+1},t_{k})\right)=-\Gamma_0(\tilde{\omega}_{ib}^b\Delta t)\\
\Rightarrow&\Gamma_0(\tilde{\omega}_{ib}^b\Delta t)\Phi_{25}^l(t_{k+1},t_{k})+\Phi_{25}^l(t_{k},t_{k})=-\int_{t_k}^{t_{k+1}} \Gamma_0(\tilde{\omega}_{ib}^bs) ds=-\Gamma_1(\tilde{\omega}_{ib}^b\Delta t)\Delta t\\
\Rightarrow&\Phi_{25}^l(t_{k+1},t_{k})=-\Gamma_0^T(\tilde{\omega}_{ib}^b\Delta t)\Gamma_1(\tilde{\omega}_{ib}^b\Delta t)\Delta t
\end{aligned}
\end{equation}

For $\Phi_{31}^l(t_{k+1},t_{k})$
\begin{equation}\label{phi_31_l}
\begin{aligned}
&\dot{\Phi}_{31}^l(t_{k+1},t_{k})=\Phi_{21}^l(t_{k+1},t_{k})-\tilde{\omega}_{ib}^b\times\Phi_{31}(t_{k+1},t_{k})\\
\Rightarrow & \dot{\Phi}_{31}^l(t_{k+1},t_{k})+\tilde{\omega}_{ib}^b\times\Phi_{31}(t_{k+1},t_{k})=\Phi_{21}^l(t_{k+1},t_{k})=-\Gamma_0^T(\tilde{\omega}_{ib}^b\Delta t)(\Gamma_1(\tilde{\omega}_{ib}^b\Delta t)\tilde{f}^b)\times \Delta t\\
\Rightarrow & \Gamma_0(\tilde{\omega}_{ib}^b\Delta t)\dot{\Phi}_{31}^l(t_{k+1},t_{k})+\Gamma_0(\tilde{\omega}_{ib}^b\Delta t)\tilde{\omega}_{ib}^b\times\Phi_{31}(t_{k+1},t_{k})=-(\Gamma_1(\tilde{\omega}_{ib}^b\Delta t)\tilde{f}^b)\times \Delta t\\
\Rightarrow & \frac{d}{dt}\left(  \Gamma_0(\tilde{\omega}_{ib}^b\Delta t){\Phi}_{31}^l(t_{k+1},t_{k})  \right)=-(\Gamma_1(\tilde{\omega}_{ib}^b\Delta t)\tilde{f}^b)\times \Delta t\\
\Rightarrow & \Gamma_0(\tilde{\omega}_{ib}^b\Delta t){\Phi}_{31}^l(t_{k+1},t_{k}) +{\Phi}_{31}^l(t_{k},t_{k})=-\int_{t_k}^{t_{k+1}}(\Gamma_1(\tilde{\omega}_{ib}^bs)\tilde{f}^b)\times s ds\\
\Rightarrow &{\Phi}_{31}^l(t_{k+1},t_{k})=- \Gamma_0^T(\tilde{\omega}_{ib}^b\Delta t)(\Gamma_2(\tilde{\omega}_{ib}^b\Delta t)\tilde{f}^b)\times \Delta t^2
\end{aligned}
\end{equation}

For $\Phi_{32}^l(t_{k+1},t_{k})$
\begin{equation}\label{phi_32_l}
\begin{aligned}
&\dot{\Phi}_{32}^l(t_{k+1},t_{k})=\Phi_{22}^l(t_{k+1},t_{k})-\tilde{\omega}_{ib}^b\times\Phi_{32}(t_{k+1},t_{k})\\
\Rightarrow & \dot{\Phi}_{32}^l(t_{k+1},t_{k})+\tilde{\omega}_{ib}^b\times\Phi_{32}(t_{k+1},t_{k})=\Phi_{22}^l(t_{k+1},t_{k})=\Gamma_0^T(\tilde{\omega}_{ib}^b\Delta t)\\
\Rightarrow & \Gamma_0(\tilde{\omega}_{ib}^b\Delta t)\dot{\Phi}_{32}^l(t_{k+1},t_{k})+\Gamma_0(\tilde{\omega}_{ib}^b\Delta t)\tilde{\omega}_{ib}^b\times\Phi_{32}(t_{k+1},t_{k})=I\\
\Rightarrow & \frac{d}{dt}\left(  \Gamma_0(\tilde{\omega}_{ib}^b\Delta t){\Phi}_{32}^l(t_{k+1},t_{k})  \right)=I\\
\Rightarrow & \Gamma_0(\tilde{\omega}_{ib}^b\Delta t){\Phi}_{32}^l(t_{k+1},t_{k}) +{\Phi}_{32}^l(t_{k},t_{k})=\int_{t_k}^{t_{k+1}}I ds\\
\Rightarrow &{\Phi}_{32}^l(t_{k+1},t_{k})=\Gamma_0^T(\tilde{\omega}_{ib}^b\Delta t) \Delta t
\end{aligned}
\end{equation}

For $\Phi_{33}^l(t_{k+1},t_{k})$
\begin{equation}\label{Phi_33_l}
\begin{aligned}
&\dot{\Phi}_{33}^l(t_{k+1},t_{k})=-\tilde{\omega}_{ib}^b\times\Phi_{33}^l(t_{k+1},t_{k})\\
\Rightarrow& \Phi_{33}^l(t_{k+1},t_{k})=\Phi_{33}^l(t_{k},t_{k})\exp\left(\int_{t_k}^{t_{k+1}}-\tilde{\omega}_{ib}^b\times ds\right)=\exp\left(\int_{t_k}^{t_{k+1}}-\tilde{\omega}_{ib}^b\times ds\right)\\
&=\Gamma_0^T(\tilde{\omega}_{ib}^b\Delta t)
\end{aligned}
\end{equation}

For $\Phi_{34}^l(t_{k+1},t_{k})$
\begin{equation}\label{phi_34_l}
\begin{aligned}
&\dot{\Phi}_{34}^l(t_{k+1},t_{k})=\Phi_{24}^l(t_{k+1},t_{k})-\tilde{\omega}_{ib}^b\times\Phi_{34}(t_{k+1},t_{k})\\
\Rightarrow & \dot{\Phi}_{34}^l(t_{k+1},t_{k})+\tilde{\omega}_{ib}^b\times\Phi_{34}(t_{k+1},t_{k})=\Phi_{24}^l(t_{k+1},t_{k})=\Gamma_0^T(\tilde{\omega}_{ib}^b\Delta t)\Psi_1\\
\Rightarrow & \Gamma_0(\tilde{\omega}_{ib}^b\Delta t)\dot{\Phi}_{34}^l(t_{k+1},t_{k})+\Gamma_0(\tilde{\omega}_{ib}^b\Delta t)\tilde{\omega}_{ib}^b\times\Phi_{34}(t_{k+1},t_{k})=\Psi_1\\
\Rightarrow & \frac{d}{dt}\left(  \Gamma_0(\tilde{\omega}_{ib}^b\Delta t){\Phi}_{34}^l(t_{k+1},t_{k})  \right)=\Psi_1\\
\Rightarrow & \Gamma_0(\tilde{\omega}_{ib}^b\Delta t){\Phi}_{34}^l(t_{k+1},t_{k}) +{\Phi}_{34}^l(t_{k},t_{k})=\int_{t_k}^{t_{k+1}}\Psi_1 ds\\
\Rightarrow &{\Phi}_{34}^l(t_{k+1},t_{k})= \Gamma_0^T(\tilde{\omega}_{ib}^b\Delta t)\Psi_2
\end{aligned}
\end{equation}
where $\Psi_2\triangleq \int_{t_k}^{t_{k+1}}\Psi_1 ds$ is defined and calculated in~\cite{hartley2020contact}.

For $\Phi_{35}^l(t_{k+1},t_{k})$
\begin{equation}\label{phi_35_l}
\begin{aligned}
&\dot{\Phi}_{35}^l(t_{k+1},t_{k})=\Phi_{25}^l(t_{k+1},t_{k})-\tilde{\omega}_{ib}^b\times\Phi_{35}(t_{k+1},t_{k})\\
\Rightarrow & \dot{\Phi}_{35}^l(t_{k+1},t_{k})+\tilde{\omega}_{ib}^b\times\Phi_{35}(t_{k+1},t_{k})=\Phi_{25}^l(t_{k+1},t_{k})=-\Gamma_0^T(\tilde{\omega}_{ib}^b\Delta t)\Gamma_1(\tilde{\omega}_{ib}^b\Delta t)\Delta t\\
\Rightarrow & \Gamma_0(\tilde{\omega}_{ib}^b\Delta t)\dot{\Phi}_{35}^l(t_{k+1},t_{k})+\Gamma_0(\tilde{\omega}_{ib}^b\Delta t)\tilde{\omega}_{ib}^b\times\Phi_{35}(t_{k+1},t_{k})=-\Gamma_1(\tilde{\omega}_{ib}^b\Delta t)\Delta t\\
\Rightarrow & \frac{d}{dt}\left(  \Gamma_0(\tilde{\omega}_{ib}^b\Delta t){\Phi}_{35}^l(t_{k+1},t_{k})  \right)=-\Gamma_1(\tilde{\omega}_{ib}^b\Delta t)\Delta t\\
\Rightarrow & \Gamma_0(\tilde{\omega}_{ib}^b\Delta t){\Phi}_{35}^l(t_{k+1},t_{k}) +{\Phi}_{35}^l(t_{k},t_{k})=-\int_{t_k}^{t_{k+1}}\Gamma_1(\tilde{\omega}_{ib}^bs)s ds\\
\Rightarrow &{\Phi}_{35}^l(t_{k+1},t_{k})=-\Gamma_0^T(\tilde{\omega}_{ib}^b\Delta t) \Gamma_2(\tilde{\omega}_{ib}^b\Delta t)\Delta t^2
\end{aligned}
\end{equation}

The observability matrix is computed as~\cite{bloesch2013state}
\begin{equation}\label{Observability_matrix}
M=\begin{bmatrix}
H_k\\
H_{k+1}\Phi_{k+1,k}\\
\vdots\\
H_l\Psi_{l,k}\\
\vdots\\
H_{k+m}\Phi_{k+m,k}
\end{bmatrix}
\end{equation}
where $\Phi_{l,k}$ is the  error state transition matrix from time $t_k$ to $t_l$, and $H_l$ is the measurement Jacobian matrix at time $t_l$.

Based on the left measurement Jacobians, the $l-th$ block row of the discrete-time observability matrix $M$ for left invariant error can be obtained 
\begin{equation}\label{M_l_th_block}
\begin{aligned}
&M_l^l=H_l^l\Phi_{l,k}^l=C_b^e\begin{bmatrix}
-(l^b\times)\Phi_{11}^l+\Phi_{31}^l & \Phi_{32}^l & \Phi_{33}^l& -(l^b\times)\Phi_{14}^l+\Phi_{34}^l& \Phi_{35}^l]
\end{bmatrix}\\
=&\Pi\begin{bmatrix}
-(\Gamma_0(\tilde{\omega}_{ib}^b\Delta t)l^b)\times-(\Gamma_2(\tilde{\omega}_{ib}^b\Delta t)\tilde{f}^b)\times \Delta t^2\quad  I\Delta t \quad I\\
 (\Gamma_0(\tilde{\omega}_{ib}^b\Delta t)l^b)\times\Gamma_1(\tilde{\omega}_{ib}^b\Delta t)\Delta t+\Psi_2\quad -\Gamma_2(\tilde{\omega}_{ib}^b\Delta t)\Delta t^2
\end{bmatrix}
\end{aligned}
\end{equation}
where $\Pi=C_b^e\Gamma_0^T(\tilde{\omega}_{ib}^b\Delta t)$.

Based on the right measurement Jacobians, the $l-th$ block row of the discrete-time observability matrix $M$ for right invariant error can be obtained 
\begin{equation}\label{M_r_th_block}
\begin{aligned}
&M_l^r=H_l^r\Phi_{l,k}^r=\begin{bmatrix}
-(\tilde{r}_{ib}^e+C_b^el^b) \times\Phi_{11}^r+\Phi_{31}^r & \Phi_{32}^r & \Phi_{33}^r& -(\tilde{r}_{ib}^e+C_b^el^b) \times\Phi_{14}^r+\Phi_{34}^r& \Phi_{35}^r]
\end{bmatrix}\\
=&C_i^e\begin{bmatrix}
-(C_e^i\tilde{r}_{ib}^e+C_e^iC_b^el^b)\times-\frac{1}{2}G_{ib}^i\times \Delta t^2\quad I\Delta t \quad I\\ ((C_e^i\tilde{r}_{ib}^e+C_e^iC_b^el^b)\times\Gamma_1(\tilde{\omega}_{ib}^b\Delta t)\Delta t+\Theta\quad \Gamma_2(\tilde{\omega}_{ib}^b\Delta t)\Delta t^2
\end{bmatrix}\\
=&C_i^e\begin{bmatrix}
-(\tilde{r}_{ib}^i+C_b^il^b)\times-\frac{1}{2}G_{ib}^i\times  \Delta t^2 \quad I\Delta t \quad I\\ ((\tilde{r}_{ib}^i+C_b^il^b)\times\Gamma_1(\tilde{\omega}_{ib}^b\Delta t)\Delta t+\Theta\quad \Gamma_2(\tilde{\omega}_{ib}^b\Delta t)\Delta t^2
\end{bmatrix}
\end{aligned}
\end{equation}
where $\Theta=\left((G_{ib}^i\times)\Gamma_3(\tilde{\omega}_{ib}^b\Delta t) \Delta t^3+\Gamma_2(\tilde{\omega}_{ib}^b\Delta t)\Delta t^2(\tilde{v}_{ib}^b\times) +\Gamma_1(\tilde{\omega}_{ib}^b\Delta t)\Delta t(\tilde{p}_{ib}^b\times)\right)$.

We can find that $M$ Is rank deficient by 1 (i.e., the dimension of its nullspace is 1). This is due to the gravity vector (yaw) is unobservable. However, since the absolute position measurements can be provided by the GNSS, the absolute position is observable.
\section{Conclusion}
In this paper, the left equivariant property of the inertial-integrated kinematics system is exploited for the design of equivariant filtering framework. 
It is can be termed as a specialization of the equivariant filtering for second order kinematic systems on $\mathrm{T}\mathbf{SE}_2(3)$. Meanwhile, the analytical state transition matrices are given in details for left invariant error and right invariant error.
\vspace{2ex}

\noindent
{\bf\normalsize Acknowledgement}\newline
{This research was supported by a grant from the National Key Research and Development Program of China (2018YFB1305001).} \vspace{2ex}

\bibliographystyle{IEEEtran}
\bibliography{sample.bib}

\end{document}